\documentclass{article}
\usepackage[preprint,nonatbib]{neurips_2024}
\usepackage{graphicx}
\usepackage{booktabs}
\usepackage[misc]{ifsym}

\usepackage{mwe}

\usepackage[utf8]{inputenc} 
\usepackage{hyperref}       
\usepackage{url}            
\usepackage{nicefrac}       
\usepackage{microtype}      

\usepackage[dvipsnames]{xcolor}
\usepackage{subcaption}
\usepackage{amsmath}
\usepackage{amsthm}
\usepackage{amsfonts}
\usepackage{mathrsfs}
\usepackage{dsfont}
\usepackage{chemfig}
\usepackage{tikz,ifthen}
\usetikzlibrary{shapes}
\usetikzlibrary{plotmarks}
\usetikzlibrary{decorations.pathreplacing,shapes.misc,shapes.symbols}
\usetikzlibrary{arrows, "shapes.geometric"}
\usetikzlibrary{calc}
\usetikzlibrary{backgrounds}
\usetikzlibrary{shapes}
\usepackage{adjustbox}
\usetikzlibrary{positioning}

\usepackage{tikz-cd}                    
\usepackage{mathtools}
\usepackage{verbatim}
\usepackage{enumitem}
\usepackage{mathdots}
\usepackage{comment}
\usepackage{cleveref}
\usepackage{float}

\newtheorem{thm}{Theorem}
\newtheorem{prop}{Proposition}

\newcommand{\Rule}{\mathbf{R}}
\newcommand{\RuleFC}{\Rule_{\operatorname{FC}}}
\newcommand{\RuleCV}{\Rule_{\operatorname{CNN}}}
\newcommand{\RuleMol}{\Rule_{\operatorname{Mol}}}
\newcommand{\RuleFinal}{\Rule_{\operatorname{Aggr}}}
\newcommand{\Mod}[1]{\ (\mathrm{mod}\ #1)}
\newcommand{\weightset}{\Theta}
\newcommand{\ruleset}{\mathcal{R}}
\newcommand{\graphlabeling}{l}
\newcommand{\labelset}{\mathcal{L}}
\newcommand{\propertyfunction}{p}

\definecolor{tab_blue}{HTML}{1f77b4}
\definecolor{tab_orange}{HTML}{ff7f0e}
\definecolor{tab_green}{HTML}{2ca02c}
\definecolor{tab_red}{HTML}{d62728}
\definecolor{tab_purple}{HTML}{9467bd}
\definecolor{tab_brown}{HTML}{8c564b}
\definecolor{tab_pink}{HTML}{e377c2}
\definecolor{tab_gray}{HTML}{7f7f7f}
\definecolor{tab_olive}{HTML}{bcbd22}
\definecolor{tab_cyan}{HTML}{17becf}

\title{Rule Based Learning with Dynamic (Graph) Neural Networks}

\author{%
  Florian Seiffarth \\
  Department of Computer Science\\
  University of Bonn\\
  53113 Bonn, Germany \\
  \texttt{seiffarth@cs.uni-bonn.de} \\
  }

\begin{document}

\maketitle              

\begin{abstract}
  A common problem of classical neural network architectures is that additional information or expert knowledge cannot be naturally integrated into the learning process.
  To overcome this limitation, we propose a two-step approach consisting of (1) generating rule functions from knowledge and (2) using these rules to define  rule based layers -- a
  new type of dynamic neural network layer.
  The focus of this work is on the second step, i.e., rule based layers that are designed to dynamically arrange learnable parameters in the weight matrices and bias vectors depending on the input samples.
  Indeed, we prove that our approach generalizes classical feed-forward layers such as fully connected and convolutional layers by choosing appropriate rules.
  As a concrete application we present rule based graph neural networks (RuleGNNs) that overcome some limitations of ordinary graph neural networks.
  Our experiments show that the predictive performance of RuleGNNs is comparable to state-of-the-art graph classifiers using simple rules based on Weisfeiler-Leman labeling and pattern counting.
  Moreover, we introduce new synthetic benchmark graph datasets to show how to integrate expert knowledge into RuleGNNs making them more powerful than ordinary graph neural networks.
\end{abstract}

\section{Introduction}\label{sec:introduction}
Using expert knowledge to increase the efficiency, interpretability or predictive performance of a neural network is an evolving research direction in machine learning~\cite{DBLP:journals/ai/TowellS94,DBLP:journals/tkde/RudenMBGGHKPPRW23}.
Many ordinary neural network architectures are not capable of using external and structural information such as expert knowledge or meta-data, e.g., graph structures in a dynamic way.
We would like to motivate the importance of ``expert knowledge'' by considering the following example.
Maybe one of the best studied examples based on knowledge integration are convolutional neural networks~\cite{DBLP:conf/nips/CunBDHHHJ89}.
Convolutional neural networks for images use at least two extra pieces of ``expert knowledge'' that is:
\textit{neighbored pixels correlate}, and
\textit{the structure of images is homogeneous}.
The consequence of this \textit{knowledge} is the use of receptive fields and weight sharing.
It is a common fact that the usage of this information about images has highly improved the predictive performance over fully connected neural networks.
But what if expert knowledge suggests that rectangular convolutional kernels are not suitable to solve the task, or there exist two far away regions in the image that are important for the classification?
In this case the ordinary convolutional neural network architecture is too \textit{static} to adapt to the new information and thus dynamic neural networks are needed.
The above example illustrates the importance of the integration of expert knowledge into neural networks and should not be understood as the task to solve in this work.
In particular, dynamic neural networks are applicable to a wide range of domains including video~\cite{DBLP:conf/iccv/Wang0JSHH21}, text~\cite{DBLP:conf/iclr/JerniteGJM17} and graphs~\cite{DBLP:conf/cvpr/SimonovskyK17}.
The limitation of current approaches is that expert knowledge is somehow implicit and not directly encoded in the network structure, i.e., for each new information a new architecture has to be designed by hand.
Our goal is to extract the essence of dynamic neural networks by defining a new type of neural network layer that is on the one side able to use expert knowledge in a dynamic way and on the other side easily configurable using a simple general scheme.
Our solution to this problem are rule based layers that are able to encode expert knowledge directly in the network structure.
As far as we know, this is the first work that defines a dynamic neural network layer in this generality.

\paragraph{Main Idea} We simplify and unify the integration of expert knowledge and additional information into neural networks by proposing a two-step approach and show how to encode given extra information directly into the structure of a neural network in a dynamic way.
In the \textit{first step} the extra information or expert knowledge is formalized using appropriate rules (e.g., \textit{certain pixels or regions in images are important}, \textit{only nodes in a graph of type A and B interact}, \textit{some patterns, e.g., cycles or cliques, in a graph are important}, etc.).
In the \textit{second step} the rules are translated into rule functions that are
used to manipulate the structure of the neural network.
More precisely, each rule gives rise to a rule function that determines the positions of the weights in the weight matrix and the bias terms.
We note that the focus of this work is on the second step as we show how to use given rules to dynamically adapt the layers.
In fact, we do not provide a general instruction for deriving formal rules from given expert knowledge.
In contrast to ordinary network layers we consider a set $\weightset$ of learnable parameters instead of fixed weight matrices.
The weight matrices and bias terms are then constructed for each input sample independently using the learnable parameters from $\weightset$.
More precisely, each learnable parameter in $\weightset$ is associated with a specific relation between an input and output feature of a layer which increases the interpretability of our model.
\Cref{fig:weight-evaluation} shows the results of an application of our approach to learning on graphs.
In this case each input and output feature corresponds to a specific node in the graph.
The input samples are vectors (signals) corresponding to (a) molecule graphs respectively (b) snippets of social networks and the task is to predict the class of the given graph.
Each colored arrow in the figure corresponds to a learned parameter from $\weightset$, i.e., a specific relation between two atoms in the molecules or two nodes in the social network.
At the same time, it visualizes the message passing between the nodes in the graph.
Considering only the parameters with the $3$ largest positive and negative values, see the last column of (a) respectively (b), we see that our approach has learned to propagate information from outer atoms to the rings respectively from the nodes to the ``important'' nodes of the social network.
This example shows several advantages of our approach: (1) the architecture is very \textit{flexible} compared to classical architectures and allows to deal with \textit{different domains} and \textit{arbitrary input dimensions},
(2) messages can pass over \textit{arbitrary distances} in graphs in one layer, and
(3) the learned parameters and hence also the models are \textit{interpretable} and can be used to extract new knowledge from the data or to improve existing rules.

\begin{figure}[t]
    \centering
    \begin{subfigure}{0.48\textwidth}
    	\includegraphics[width=\textwidth]{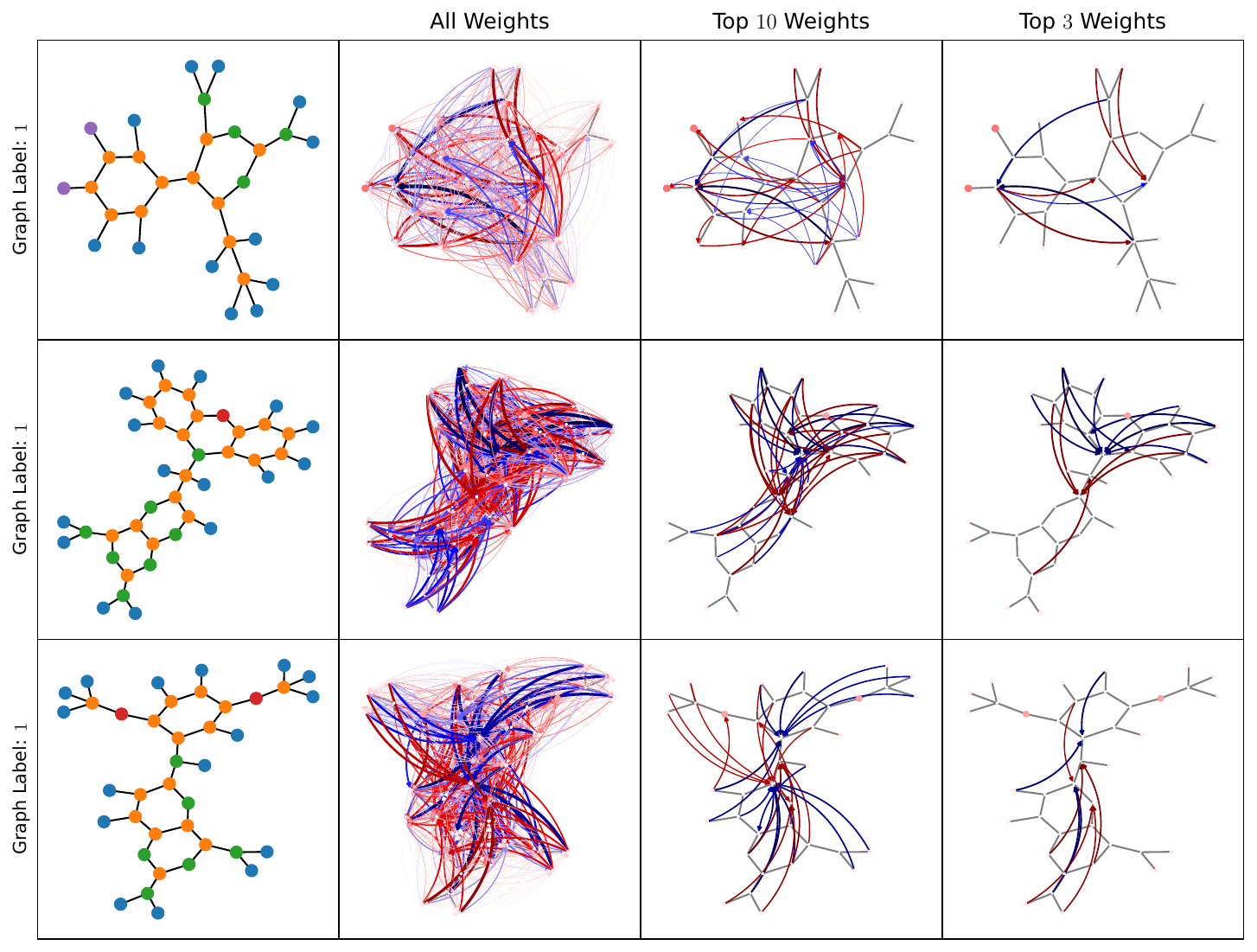}
        \caption{DHFR}
    \end{subfigure}\quad
    \begin{subfigure}{0.48\textwidth}
        \includegraphics[width=\textwidth]{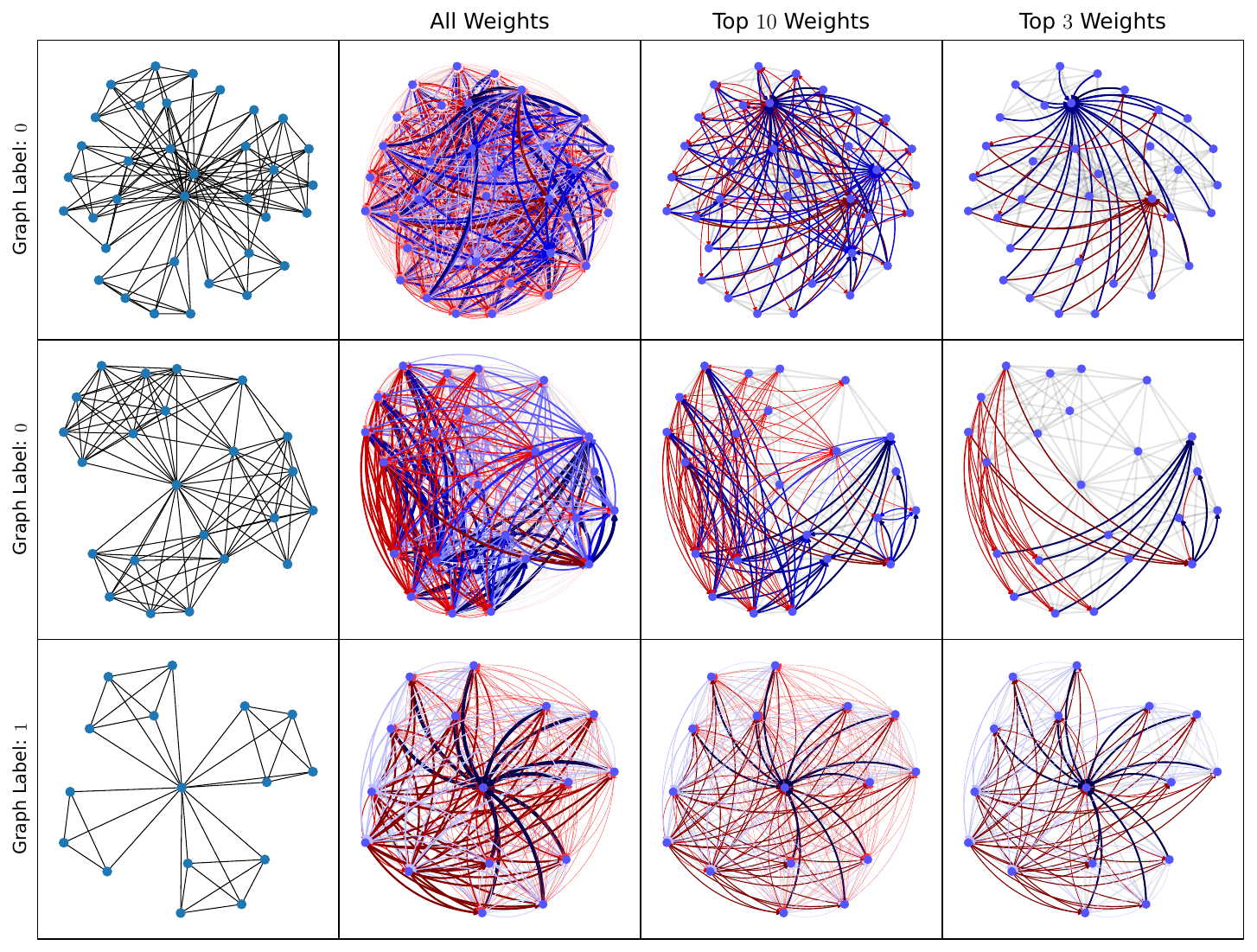}
        \caption{IMDB-BINARY}
    \end{subfigure}
    \caption{\label{fig:weight-evaluation} Visualization of the learned parameters of the best RuleGNN model on DHFR (a) and IMDB-BINARY (b) for three different random graphs from the test set.
    The label of the graph is given on the left side of the figure.
    Positive weights are denoted by red arrows and negative weights by blue arrows.
        The thickness and color corresponds to the absolute value of the weight.
        The size of the nodes corresponds to the bias values.
        The second to fourth columns of (a) resp. (b) show all, the $10$ and the $5$ largest positive and negative weights.}
\end{figure}

\paragraph{Main Contributions}
We define a new type of neural network layer called rule based layer.
This new layer can be integrated into arbitrary architectures making them dynamic, i.e., the structure of the network changes based on the input data and the predefined rules.
We prove that rule based layers generalize classical feed-forward layers such as fully connected and convolutional layers.
Additionally, we show that rule based layers can be applied to graph classification, by introducing RuleGNNs, a new type of graph neural networks.
In this way we are able to extend the concept of dynamic neural networks to graph neural networks together with all the advantages of dynamic neural networks.
RuleGNNs are by definition permutation equivariant, i.e., invariant to the order of the nodes in the graph
and able to handle graphs of arbitrary sizes.
Considering various real-world graph datasets, we demonstrate that RuleGNNs are competitive with state-of-the-art graph neural networks and other graph classification methods.
Using synthetic graph datasets we show that ``expert knowledge'' is easily integrable into our neural network architecture and also necessary for classification\footnote{See \href{https://github.com/fseiffarth/RuleGNNCode}{https://github.com/fseiffarth/RuleGNNCode}
and \href{https://github.com/fseiffarth/gnn-comparison}{https://github.com/fseiffarth/gnn-comparison}
for our code and results.}.

The rest of the paper is organized as follows: We introduce the concept of rule based layers in~\Cref{sec:RuleBasedLearning} and prove in Section~\ref{sec:TheoreticalAspects} that rule based layers generalize fully connected and convolutional layers.
In Section~\ref{sec:RuleBasedLearningOnGraphs} we present RuleGNNs and apply them in~\Cref{sec:Experiments} to different benchmark datasets and compare the results to state-of the art graph neural networks.
Finally, in~\Cref{sec:RelatedWork} we discuss related work, and in~\Cref{sec:Conclusion} some limitations and future work.

\section{Rule Based Learning}\label{sec:RuleBasedLearning}
Introducing the concept of rule based learning we first present some basic definitions followed by the formal definition of rule based layers.

\paragraph{Preliminaries}
For some $n\in\mathbb{N}$ we denote by $[n]$ the set $\{1,\ldots, n\}$.
A neural network is denoted by a function $\mathbf{f}(-, \weightset):\mathbb{R}^n\longrightarrow\mathbb{R}^m$ with the learnable parameters $\weightset$.
We extend this notation introducing an additional parameter $\ruleset$, that is the set of formal rules $\ruleset=\{\Rule^1,\ldots,\Rule^k\}$.
The exact definition of these rules is given in the next paragraph.
Informally, a rule $\Rule\in\ruleset$ is a function that determines the distribution of the weights in the weight matrix or the bias term of a layer.
A rule $\Rule$ is called \textit{dynamic} if it is a function in the input samples $x\in\mathbb{R}^n$ otherwise it is called \textit{static}.
An example of a static rule is the one used to define convolutional layers, see \Cref{prop:convolutional}.
An example of a dynamic rule can be found in Section~\ref{sec:RuleBasedLearningOnGraphs}.
A rule based neural network is a function $\mathbf{f}(-, \weightset, \ruleset):\mathbb{R}^*\longrightarrow\mathbb{R}^*$ that depends on a set of learnable parameters denoted by $\weightset$ and some rule set $\ruleset$ derived from expert knowledge or additional information.
The notation $*$ in the domain and co-domain of $\mathbf{f}$ indicates that the input and output can be of arbitrary or variable dimension.
As usual $\mathbf{f}$ is a concatenation of sub-functions $f^1, \ldots, f^l$ called the layers of the neural network.
More precisely, the $i$-th layer is a function $f^i(-,\weightset^i, \Rule^i):\mathbb{R}^{*}\longrightarrow\mathbb{R}^{*}$ where $\weightset^i$ is a subset of the learnable parameters $\weightset$ and $\Rule^i$ is an element of the ruleset $\ruleset$.
We call a layer $f^i$ \textit{static} if $\Rule^i$ is a static rule and \textit{dynamic} if $\Rule^i$ is a dynamic rule.
The input data is a triple $(\mathbf{D}, \mathbf{L}, \mathbf{I})$, where $\mathbf{D}=\{x_1\ldots, x_k\}$ with $x_i\in\mathbb{R}^*$ is the set of examples drawn from some unknown distribution.
The labels are denoted by $\mathbf{L}=(y_1\ldots, y_k)$ with $y_i\in\mathbb{R}^*$ and $\mathbf{I}$ is some additional information known about the input data $\mathbf{D}$, e.g., knowledge about the graph structure, node or edge labels or importance of certain regions in an image.
One main assumption of this paper is that $\mathbf{I}$ can be used to derive a set of static or dynamic rules $\ruleset$.
Again we would like to mention that we concentrate on the analysis of the effects of applying different rules $\Rule$ and not on the very interesting but also wide field of deriving the best rules $\ruleset$ from $\mathbf{I}$, see some discussion in Section~\ref{sec:Conclusion}.
Nonetheless, we always motivate the choice of the rules derived by $\mathbf{I}$.

\paragraph{Rule Based Layers}
In this section we will give a formal definition of a rule based layer.
Given some dataset $(\mathbf{D}, \mathbf{L}, \mathbf{I})$ defined as before and the rule set $\ruleset$ derived from $\mathbf{I}$, the task is to learn the weights $\weightset$ of the rule based neural network $\mathbf{f}$ to predict the labels of unseen examples drawn from an unknown distribution.
Our contribution concentrates on single layers and is fully compatible with other layers such as linear layers or convolutional layers
Hence, in the following we restrict to the $i$-th layer $f^i(-,\weightset^i, \Rule^i):\mathbb{R}^{*}\longrightarrow\mathbb{R}^{*}$ of a network $\mathbf{f}$.
For simplicity, we assume $i=1$ and omit the indices, i.e., we write $f\coloneqq f^i$, $\weightset\coloneqq\weightset^i$ and $\Rule\coloneqq\Rule^i$.
The forward propagation step of the rule based layer $f$ which will be a generalization of certain known layers as shown in Section~\ref{sec:TheoreticalAspects} is as follows.
Fix some input sample $x\in\mathbf{D}$ with $x\in \mathbb{R}^n$.
Then $f(-, \weightset, \Rule):\mathbb{R}^n\longrightarrow\mathbb{R}^m$ for $n, m\in\mathbb{N}$ is given by
\begin{align}\label{eq:RBL}f(x, \weightset, \Rule)=\sigma(W_{\Rule_W(x)}\cdot x+b_{\Rule_b(x)})\enspace .\end{align}
Here $\sigma$ denotes an arbitrary activation function and $W_{\Rule_W(x)}\in\mathbb{R}^{m\times n}$ rsp. $b_{\Rule_b(x)}\in\mathbb{R}^m$ is some weight matrix rsp. weight vector depending on the input vector $x$ and the rule $\Rule$.
The set $\weightset\coloneqq\{w_1,\ldots, w_N, b_1, \ldots, b_M\}$ consists of all possible learnable parameters of the layer.
The parameters $\{w_1,\ldots, w_N\}$ are possible entries of the weight matrix while $\{b_1,\ldots, b_M\}$ are possible entries of the bias vector.
The key point here is that the rule $\Rule$ determines the choices and the positions of the weights from $\weightset$ in the weight matrix $W_{\Rule_W(x)}$ and the bias vector $b_{\Rule_b(x)}$ depending on the input sample $x$.
In particular, \textit{not} all learnable parameters must be used in the weight matrix and the bias vector for some input sample $x$.
In contrast to ordinary neural network layers, the weight matrix and the bias vector are not fixed but \textit{functions} of the input sample $x$.
Moreover, for two samples $x, y\in\mathbf{D}$ of different dimensionality, e.g., $x\in\mathbb{R}^n$ and $y\in\mathbb{R}^k$ with $n\neq k$ the weight matrices $W_{\Rule_W(x)}$ and $W_{\Rule_W(y)}$ also have different dimensions and the learnable parameters can be in totally different positions in the weight matrix.

Given the set of learnable parameters $\weightset\coloneqq\{w_1,\ldots, w_N, b_1, \ldots, b_M\}$, for each input $x\in\mathbb{R}^n$ each rule $\Rule$ induces two rule functions

 \begin{align}
     \label{eq:RBF}\Rule_W(x): [m]\times[n]\longrightarrow \{0\} \cup [N] \enspace\enspace \text{ and } \enspace\enspace \Rule_b(x): [m]\longrightarrow \{0\} \cup [M]
 \end{align}

where $m\in\mathbb{N}$ is the output dimension of the layer that can also depend on $x$.
 In the following we abbreviate $\Rule_W(x)(i, j)$ by $\Rule_W(x, i, j)$ and $\Rule_b(x)(i)$ by $\Rule_b(x, i)$.
For simplicity, we assume that the matrix and vector indices start at $1$ and not at $0$.
 Using the associated rule functions~\eqref{eq:RBF} we can construct the weight matrix resp. bias vector by defining the entry $(i,j)\in\mathbb{R}^{m\times n}$ in the $i$-th row and the $j$-th column of the weight matrix $W_{\Rule(x)}\in\mathbb{R}^{m\times n}$ via
 \begin{align}\label{def:WeightMatrix}W_{\Rule_W(x)}(i, j):=
 \begin{cases}0& \text{if } \Rule_W(x, i, j) = 0\\
 w_{\Rule_W(x, i, j)}& \text{ o.w.}
 \end{cases}
\end{align}
and the entry at position $k$ in the bias vector $b_{\Rule_b(x)}\in\mathbb{R}^m$ by
 \begin{align}\label{def:BiasVector}b_{\Rule_b(x)}(k):=
	\begin{cases}0& \text{if } \Rule_b(x, k) = 0\\
		b_{\Rule_b(x, k)}& \text{ o.w.}
	\end{cases}.
\end{align}

Hence, an entry of the weight matrix or the bias vector of a rule based layer as defined in~\eqref{eq:RBL}
is zero if the value of the rule function is zero, otherwise the entry is the learnable parameter from the set $\weightset$ and the index is given by the rule function.
More precisely, the rule controls the connection between the $i$-th input and the $j$-th output feature in the weight matrix.
A rule $\Rule$ is called \textit{static} if the corresponding rule functions $\Rule_W$ and $\Rule_b$ are
independent of the input $x\in\mathbf{D}$, i.e., $\Rule_W(x)=\Rule_W(y)$ and $\Rule_b(x)=\Rule_b(y)$
for all inputs $x, y\in \mathbb{R}\in\mathbf{D}$ otherwise it is called \textit{dynamic}.
We call a rule based layer as defined in~\eqref{eq:RBL} \textit{static} if it is based on a static rule $\Rule$ and \textit{dynamic} otherwise.
We will show in \Cref{sec:TheoreticalAspects} that rule based layers generalize known concepts of neural network layers for specific rules $\Rule$.
In fact, we show that fully connected layers and convolution layers are static rule based layers.
Examples of dynamic rule based layers are given later on in Section~\ref{sec:RuleBasedLearningOnGraphs}.
The back-propagation of such a layer can be done as usual enrolling the computation graph of the forward step and applying iteratively the chain rule to all the computation steps.
We will not go into the details of this computation as it is similar to many other computations using backpropagation with shared weights.
For the experiments we use the automatic backpropagation tool of PyTorch~\cite{Paszke_PyTorch_An_Imperative_2019} which fully meets our requirements.

\paragraph{Assumptions and Examples}
Rule based learning relies on the following two main assumptions:
$A1)$ There is a connection between the additional information or expert knowledge $\mathbf{I}$ and the used rule $\Rule$ and
$A2)$ The distribution of weights given by the rule $\Rule$ in the weight matrix $W_{\Rule(x)}$ improves the predictive performance or increases the interpretability of the neural network.
As stated before we concentrate on the second assumption and consider different distribution of weights in the weight matrix given by different rules.
In fact, we assume without further consideration that it is possible to derive meaningful rules $\Rule$ from the additional information or expert knowledge $\mathbf{I}$.
For example if the dataset consists of images we can derive the ``informal'' rule that neighbored pixels are more important than pixels far away
and in case of chemical data there exists, e.g., the ortho-para rule for benzene rings that makes assumptions about the influence of atoms for specific positions regarding the ring.
This rule was already learned by a neural network in~\cite{Zhou2017GraphCA}.
It is another very interesting task which is beyond the scope of this work how to formalize these ``informal'' rules or to learn the ``best'' formal rules from the additional information $\mathbf{I}$.

In the following sections we focus on the concept of rule based layers and therefore for simplicity we only consider the rule function of weight matrices.
The rule function associated with the bias term can be constructed similarly.
For simplicity, we write $\Rule$ instead of $\Rule_W$.

\section{Theoretical Aspects of Rule Based Layers}\label{sec:TheoreticalAspects}
In this section we provide a theoretical analysis of rule based layers and show that they generalize fully connected and convolutional layers.
More precisely, we define two \textit{static} rules $\RuleFC$ and $\RuleCV$ and show that the rule based layer as defined in~\eqref{eq:RBL}
based on $\RuleFC$ is a fully connected layer and the rule based layer based on $\RuleCV$ is a convolutional layer.

\begin{prop}\label{prop:fullyconnected}
	Let $f(-, \weightset,\RuleFC):\mathbb{R}^{n}\longrightarrow\mathbb{R}^{m}$
with \[f(y, \weightset,\RuleFC)=\sigma(W_{\RuleFC(x)}\cdot y)\] be a rule based layer of a neural network as defined in~\eqref{eq:RBL} (without bias term) with learnable parameters $\weightset=\{w_1, \ldots, w_{n\cdot m}\}$ and $y=\mathbf{f}^{i}(x)$ is the result of the first $i-1$ layers.
Then for the rule function $\RuleFC(x):[m]\times[n]\rightarrow[m\cdot n]$ defined for all inputs $x$ as follows
	\[\RuleFC\coloneqq\RuleFC(x)(i, j)\coloneqq(i-1)\cdot n + j,\]
	the rule based layer $f$ is equivalent to a fully connected layer with activation function $\sigma$.
\end{prop}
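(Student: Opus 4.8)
The plan is to show that, under the static rule $\RuleFC$, the matrix constructed by \eqref{def:WeightMatrix} is a completely \emph{free} matrix in $\mathbb{R}^{m\times n}$ whose entries are unconstrained learnable parameters, which is precisely the defining property of a fully connected layer. First I would note that $\RuleFC$ is static: the formula $(i-1)\cdot n + j$ does not reference the input, so $W_{\RuleFC(x)}$ is the same matrix for every $x$ (and in particular it is irrelevant that the rule is evaluated at $x$ while the layer acts on $y=\mathbf{f}^{i}(x)$). We may therefore drop the argument and simply write $W_{\RuleFC}$.

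The key step is a small combinatorial observation: the index map $(i,j)\mapsto(i-1)\cdot n + j$ is a bijection from $[m]\times[n]$ onto $[m\cdot n]$. This is ordinary row-major indexing---for fixed $i$, letting $j$ range over $[n]$ sweeps out the contiguous block $\{(i-1)n+1,\ldots,in\}$, and these $m$ blocks partition $[m\cdot n]$ as $i$ ranges over $[m]$. Injectivity then follows from the division algorithm (the pair $(i,j)$ is recovered as the quotient-plus-one and remainder of $(i-1)n+j-1$ modulo $n$), and since the domain and codomain both have cardinality $m\cdot n$, injectivity already forces bijectivity.

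From the bijection I would read off the two consequences that finish the argument. Since $\RuleFC(x,i,j)=(i-1)n+j\geq 1$ for every pair $(i,j)$, the value is never $0$, so the first case of \eqref{def:WeightMatrix} never triggers and every entry equals a learnable parameter, $W_{\RuleFC}(i,j)=w_{(i-1)n+j}$. Because the map is bijective, each of the parameters $w_1,\ldots,w_{m\cdot n}$ occupies exactly one position and every position of the matrix is filled, so there is neither weight sharing nor any enforced zero. Hence, as $\weightset$ ranges over $\mathbb{R}^{m\cdot n}$, the matrix $W_{\RuleFC}$ ranges over all of $\mathbb{R}^{m\times n}$, and substituting into \eqref{eq:RBL} (with no bias term) yields $f(y,\weightset,\RuleFC)=\sigma(W_{\RuleFC}\cdot y)$ with $W_{\RuleFC}$ an arbitrary $m\times n$ matrix---by definition a fully connected layer with activation $\sigma$.

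I do not expect a serious obstacle here; the proposition is essentially a bookkeeping statement. The one point genuinely requiring care is the bijectivity of the index map, since this is exactly what guarantees that the resulting matrix is \emph{arbitrary} (no repeated parameters, no missing entries) rather than merely some structured or sparse matrix. Everything else is a direct unwinding of the definitions.
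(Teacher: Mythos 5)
Your proof is correct and follows essentially the same route as the paper's: both reduce the claim to showing that the weight matrix $W_{\RuleFC(x)}$ constructed via~\eqref{def:WeightMatrix} is filled with $n\cdot m$ distinct, unconstrained parameters. The only difference is that the paper asserts this "can be easily checked," whereas you actually carry out the check by proving the row-major index map is a bijection onto $[m\cdot n]$ that never takes the value $0$ --- a welcome filling-in of detail, not a different argument.
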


\begin{proof}
To show the equivalence between the two layers it suffices to show that their weight matrices coincide.
	In case of fully connected layers we have to show that the weight matrix $W_{\RuleFC(x)}\in\mathbb{R}^{m\times n}$ is filled with $n\cdot m$ distinct weights.
	This can be easily checked by computing $W_{\RuleFC(x)}$ using the definition of the weight distribution based on the rule function in~\eqref{def:WeightMatrix}.
\end{proof}

\Cref{prop:fullyconnected} shows that rule based layers generalize fully connected layers of arbitrary size without bias vector and can be easily adapted to include the bias vector.
Hence, this shows that rule based layers generalize arbitrary fully connected layers.
Moreover, fully connected layers are static rule based layers as the rule $\RuleFC$ is static because it does not depend on the particular input $x$.

\begin{prop}\label{prop:convolutional}
	Let $f(-, \weightset,\RuleCV):\mathbb{R}^{n\cdot m}\longrightarrow\mathbb{R}^{(n-N + 1)\cdot(m-N + 1)}$
with \[f(y, \weightset,\RuleCV)=\sigma(W_{\RuleCV(x)}\cdot y)\] be a rule based layer of a neural network as defined in~\eqref{eq:RBL} (without bias term) with learnable parameters $\weightset=\{w_1, \ldots, w_{N^2}\}$ and $y=\mathbf{f}^{i}(x)$ is the result of the first $i-1$ layers.
Then for the rule function $\RuleCV:[(n-N+1)\cdot(m-N+1)]\times[n\cdot m]\rightarrow[N^2]$ defined by
	\[\RuleCV\coloneqq\RuleCV(x)(i, j)\coloneqq\begin{cases}
	\tau(i,j) & \mbox{if~~} 0 < \gamma(i, j) < N\cdot n \text{ and } \\
	& ~0 < j \Mod{n} - j + \gamma(i, j) < N\\
	0 & \text{o.w.}
	\end{cases}
	\]
\[\begin{array}{llll}
	  \renewcommand{\arraystretch}{1.5}
	  \text{with }&\tau(i,j)&=&\gamma(i, j)-((\gamma(i, j)-1)//n)\cdot(n-N)\\
	  \text{and }&\gamma(i, j)&=&j-((i-1)//(n-N + 1))\cdot n + (i-1)\Mod{(n-N + 1)}
\end{array}\]

	the rule based layer $f$ is equivalent to a convolution layer with quadratic kernel of size $N$ ($N\leq n$, $N\leq m$) and a stride of one over a two-dimensional image of size $n\times m$ (without padding and bias vector) with activation function $\sigma$.
The notation $a//b$ denotes the integer division.
\end{prop}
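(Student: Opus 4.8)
The plan is to argue exactly as in \Cref{prop:fullyconnected}: since both the rule based layer and the convolutional layer have the form $\sigma(W\cdot y)$ with the same activation $\sigma$ and the same input $y$, it suffices to show that the weight matrix $W_{\RuleCV(x)}\in\mathbb{R}^{(n-N+1)(m-N+1)\times nm}$ produced by \eqref{def:WeightMatrix} coincides entry-by-entry with the matrix that represents the convolution. First I would fix the flattening conventions implicit in the statement: the input $y\in\mathbb{R}^{nm}$ is the row-major flattening of the $n\times m$ image (so that input index $j$ sits in row $((j-1)//n)$ and column $((j-1)\bmod n)$, each row having length $n$), the output is flattened the same way into $m-N+1$ blocks of length $n-N+1$ (so that output index $i$ corresponds to the window whose top-left corner is at row $(i-1)//(n-N+1)$ and column $(i-1)\bmod(n-N+1)$), and the $N^2$ kernel weights $w_1,\dots,w_{N^2}$ are indexed in row-major order. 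Under these conventions the convolution is multiplication by the usual doubly block Toeplitz matrix, whose entry $(i,j)$ equals the kernel weight $w_{aN+b+1}$ when input $j$ lies at offset $(a,b)\in\{0,\dots,N-1\}^2$ inside the receptive field of output $i$, and $0$ otherwise.

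With the target matrix identified, the core of the proof is to check that the rule function reproduces this pattern, which I would split into two claims. The \emph{support} claim: $\RuleCV(x,i,j)\neq 0$ if and only if input $j$ lies in the receptive field of output $i$. Here $\gamma(i,j)$ is designed to measure the position of $j$ relative to the window's top-left corner; the constraint $0<\gamma(i,j)<N\cdot n$ confines the vertical offset to the $N$ admissible rows, while the constraint $0<(j\bmod n)-j+\gamma(i,j)<N$ confines the horizontal offset to the $N$ admissible columns. The \emph{value} claim: whenever the entry is nonzero, $\tau(i,j)$ returns the correct row-major kernel index $aN+b+1$, where the correction term $((\gamma(i,j)-1)//n)\cdot(n-N)$ is precisely what removes the $n-N$ ``missing'' columns per image row, collapsing the image-row stride $n$ down to the kernel-row stride $N$. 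Combining both claims with \eqref{def:WeightMatrix} gives $W_{\RuleCV(x)}(i,j)$ equal to the Toeplitz entry for every $(i,j)$, and since $\RuleCV$ does not depend on $x$ this holds for every input, establishing both the equivalence and the staticity of the layer.

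The main obstacle is purely the index bookkeeping, and within it the subtle point is the horizontal constraint. A naive one-dimensional reading of the flattened image would let an $N$-wide window near the right edge of a row ``wrap around'' and pick up pixels belonging to the next image row; the whole purpose of phrasing the second condition through $j\bmod n$ rather than through $\gamma$ alone is to detect and forbid exactly these boundary-crossing connections, so that the receptive field stays a genuine $N\times N$ block. I would therefore treat the row-boundary cases explicitly when verifying the horizontal condition, and handle the translation between the two-dimensional offsets $(a,b)$ and the linear indices via the standard identities for $//$ and $\bmod$, after which the equality of the two matrices reduces to a routine case distinction.
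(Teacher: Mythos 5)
Your proposal follows essentially the same route as the paper's own proof: reduce the equivalence to an entry-by-entry comparison of the rule-generated weight matrix $W_{\RuleCV(x)}$ with the doubly block circulant (block Toeplitz) matrix that represents convolution with the $N\times N$ kernel. In fact, your outline is more explicit than the paper's argument, which simply asserts that the rule returns the correct entries; your decomposition into a support claim and a value claim, the fixed flattening conventions, and the treatment of the row-boundary (wrap-around) cases are precisely the bookkeeping the paper leaves implicit.
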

\begin{proof}
	Instead of the original two-dimensional image of size $n\times m$ we consider a reshaped vector $x\in\mathbb{R}^{n\cdot m}$ as our definition of rule based layers uses vector matrix multiplication.
	The output vector of dimension $(n-N + 1)\cdot(m-N + 1)$ can then again be reshaped into a two-dimensional image of size $(n-N + 1)\times(m-N + 1)$.
	Unfortunately, the reshaping makes the rule function complicated as the indices of the reshaped vector have to be mapped to the indices of the two-dimensional image.

	First note that convolution with a $N\times N$ kernel corresponds to matrix-vector multiplication of a doubly block circulant matrix that is a special case of a block Toeplitz matrix.
	Hence, what remains to show the equivalence between the layers is to compare the weight matrices and show that the entries in $W_{\RuleCV(x)}\in\mathbb{R}^{{(n-N + 1)\cdot(m-N + 1)}\times {n\cdot m}}$ exactly match the entries in the doubly block circulant matrix that corresponds to the convolution kernel.
	Indeed, using the definition of the doubly block circulant matrix that corresponds to the convolution kernel and compare it to the above given rule shows that the rule exactly returns the correct entries.
	Hence, the multiplication of $x$ with $W_{\RuleCV(x)}$ is equivalent to multiplication of $x$ with the doubly block circulant matrix that is equivalent to the convolution of $x$ with a kernel of size $N\times N$.
\end{proof}

	Proposition~\ref{prop:convolutional} shows that rule based layers generalize 2D-image convolution without padding and bias term.
	By adaption of the rule function it is possible to include the bias vector and padding.
	Moreover, the result can be generalized to higher dimensions kernels, non-quadratic kernels and arbitrary input and output channels.
	In fact, rule based layers can represent arbitrary shaped receptive fields changing the rule function accordingly.
	Hence, rule based layers generalize arbitrary convolutional layers.
	Convolutional layers are static rule based layers as the rule $\RuleCV$ is static because it is independent of the input.
	The following result is a direct implication from Propositions~\ref{prop:fullyconnected} and~\ref{prop:convolutional}.

\begin{thm}
	Rule based layers generalize fully connected and convolutional feed-forward layers.
	In particular, both layers are static rule based layers.
\end{thm}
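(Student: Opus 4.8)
The plan is to derive the theorem directly from the two propositions already established, since the statement is essentially the conjunction of their conclusions. First I would invoke \Cref{prop:fullyconnected}: for every choice of input and output dimensions $n, m$ it exhibits an explicit rule $\RuleFC$ for which the rule based layer of~\eqref{eq:RBL} reproduces exactly the dense weight matrix of a fully connected layer, where the learnable parameters $\weightset=\{w_1,\ldots,w_{n\cdot m}\}$ may take arbitrary values. Because this realization works for all $n, m$, every fully connected layer arises as a rule based layer for a suitable rule and suitable parameters; this is precisely the statement that rule based layers generalize fully connected layers.

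Second I would invoke \Cref{prop:convolutional} in the same manner: the rule $\RuleCV$ makes the rule based layer coincide with a 2D-convolutional layer, and, as noted in the remark following that proposition, the construction extends to include a bias term and padding as well as higher-dimensional and non-quadratic kernels and arbitrary input/output channels. Hence every convolutional layer is likewise realizable as a rule based layer, which gives the second half of the generalization claim. Combining the two realizations establishes the first sentence of the theorem.

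For the ``in particular'' assertion I would appeal to the definition of a static rule, namely that $\Rule$ is static exactly when $\Rule_W$ and $\Rule_b$ do not depend on the input $x$. Inspecting the two rule functions, $\RuleFC(x)(i,j)=(i-1)\cdot n+j$ and the piecewise formula defining $\RuleCV(x)(i,j)$, one sees that the variable $x$ never appears on the right-hand side; both are manifestly independent of $x$ and therefore static, so the associated rule based layers are static as well. There is no genuine obstacle here: the only point requiring care is to read ``generalize'' in the correct direction, i.e.\ as the inclusion of the families of fully connected and convolutional layers inside the family of rule based layers, which is exactly what the two propositions supply, so the theorem follows by quoting them without any further computation.
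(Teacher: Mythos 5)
Your proposal is correct and matches the paper's treatment exactly: the paper states this theorem as a direct implication of \Cref{prop:fullyconnected} and \Cref{prop:convolutional}, together with the observations following each proposition that $\RuleFC$ and $\RuleCV$ are independent of the input $x$ and hence static. No further argument is needed.
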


We claim that also other types of feed-forward layers can be generalized by rule based layers using appropriate rule functions.

\section{Rule Based Learning on Graphs}\label{sec:RuleBasedLearningOnGraphs}
One of the main advantages of rule based layers as introduced in this work is that they give rise to a dynamic neural network architecture that is freely configurable using almost arbitrary rule functions.
In fact, the rule based neural networks can handle input samples independent of the dimension and structure.
Hence, a natural application of our approach is the task of graph classification.
We would like to emphasize that graph classification is only one of many possible applications of rule based layers.
Other possible applications are node classification, regression tasks, graph embeddings or applications on completely different domains like text or images.

\paragraph{Graph Preliminaries}
A graph is a pair $G=(V, E)$ with $V$ denoting the set of nodes of $G$ and $E\subseteq\{\{i,j\}\mid i,j\in V\}$ the set of edges.
All graphs are undirected and do not contain self-loops or parallel edges.
In case that it is clear from the context we omit $G$ and only use $V$ and $E$.
The distance between two nodes $i,j\in V$ in a graph, i.e., the length of the shortest path between $i$ and $j$, is denoted by $d(i,j)$.
A labeled graph is a graph $G=(V, E, l)$ equipped with a function $l:V\rightarrow \labelset$ that assigns to each node a label from the set $\labelset\subseteq\mathbb{N}$.
Two labeled graphs $G=(V, E, l)$ and $G'=(V', E', l')$ are isomorphic if there exists a bijection $\pi:V\rightarrow V'$ such that $\{i,j\}\in E$ if and only if $\{\pi(i), \pi(j)\}\in E'$ and $l(i)=l'(\pi(i))$ for all $i\in V$.
In this work the input samples corresponding to a graph $G=(V, E)$ are always vectors $x\in\mathbb{R}^{|V|}$.
In particular, the input vectors can be interpreted as signals over the graph and each dimension of the input vector corresponds to the one-dimensional input signal of a graph node

\subsection{Graph Rules}\label{subsec:graph-rules}
The example on molecule graphs in~\Cref{fig:RuleGNNExample} and~\Cref{subsec:example-molecule-graphs} motivates the intuition behind different graph specific rules that can be used to define a graph neural network based on rule layers.
Note that for $m=n=|V|$ the rule functions as defined in~\eqref{eq:RBF} can be interpreted as a mapping
from node pairs $(i,j)\in V\times V$  ($\Rule_W$) or nodes $i\in V$ ($\Rule_b$) to $0$ or the index of the learnable parameter in $\weightset$.
Two different node pairs $(i,j)$ and $(k,l)$ should map to the same integer if and only they ``behave similar'' in the graph.
Our starting point for a general scheme to define rule functions for graphs is a labeling function $\graphlabeling:V\rightarrow\labelset$.
In case of molecule graphs take for example the atom labels as node labels, see \Cref{subsec:example-molecule-graphs} as an example.
For unlabeled graphs it is possible to use the degree of a node.
Moreover, we use a property function $\propertyfunction:V\times V\rightarrow 0\cup\mathbb{N}$ that defines a relation between two nodes $i,j\in V$ in a graph.
Examples are the distance between two nodes, the type of edge connecting the nodes or the information that $i$ and $j$ are in one circle or not.
In this way we assign a triple $t(i,j)=(\graphlabeling(i), \graphlabeling(j), \propertyfunction(i,j))$ to each pair of nodes $(i,j)$.
Wit this preliminary work we can define $\Rule_W$ and $\Rule_b$ from~\eqref{eq:RBF} as follows.
We recall that the output of $\Rule_W$ maps each pair of nodes to some integer (index of the weight) or zero (no connection).
If a certain property is not fulfilled, e.g., if the distance between two nodes is too large or the type of the edge is invalid there should be no connection between the nodes.
Thus, for $\mathcal{D}\subseteq 0\cup\mathbb{N}$ being the set of all valid values for $\propertyfunction$
values we require $\Rule_W(i,j)=0$ if $p(i,j)\notin \mathcal{D}$.
Moreover, we require $\Rule_W(i,j)=\Rule_W(k,l)$ if and only if $t(i,j)=t(k,l)$ and $p(i,j)=p(k,l)\in \mathcal{D}$.
For the bias term we require that $\Rule_b$ maps each node to zero or some integer and two nodes $i$ and $j$ to the same integer if and only if $\graphlabeling(i)=\graphlabeling(j)$.
Besides these requirements the exact value of the integer is not important as it is only the index of the weight in the set of learnable parameters.
Of course in practice it is of advantage to use consecutive integers starting at $1$ for the indices.

\begin{prop}\label{prop:permutation-equivariance}
	Let $f(-, \weightset,\Rule)$ be a graph rule based layer and
	$x\in\mathbb{R}^{|V|}$ the input signal corresponding to a graph $G=(V,E)$
	Then for every permutation $\pi$ of the node order of $G$ (permutation of the entries of $x$) it holds $f(\pi(x), \weightset,\Rule)=\pi(f(x, \weightset,\Rule))$, i.e., $f$ is permutation equivariant.
\end{prop}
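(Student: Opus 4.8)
The plan is to show that permuting the input signal induces a corresponding relabeling of the weight matrix and bias vector, so that the linear map commutes with the permutation. The key observation is that the rule functions $\Rule_W$ and $\Rule_b$ depend only on the triples $t(i,j)=(\graphlabeling(i), \graphlabeling(j), \propertyfunction(i,j))$ and on $\graphlabeling(i)$, all of which are intrinsic graph quantities that transform equivariantly under relabeling of the nodes. First I would fix a permutation $\pi$ of $V$ and make precise how it acts: as a permutation matrix $P_\pi$ on $\mathbb{R}^{|V|}$ with $P_\pi(x)_i = x_{\pi^{-1}(i)}$, so that $\pi(x)=P_\pi x$ and the claim becomes $f(P_\pi x, \weightset, \Rule) = P_\pi f(x, \weightset, \Rule)$.

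Next I would analyze how the rule functions behave under $\pi$. Since the node label $\graphlabeling$, the distance $d$, and more generally the property function $\propertyfunction$ are defined intrinsically on the (isomorphism class of the) graph, relabeling the nodes by $\pi$ yields $\graphlabeling(\pi(i))=\graphlabeling(i)$, $\propertyfunction(\pi(i),\pi(j))=\propertyfunction(i,j)$ in the sense that the property of the node pair is preserved, and hence $t(\pi(i),\pi(j))=t(i,j)$. Because the defining requirement from Section~\ref{subsec:graph-rules} is exactly that $\Rule_W(i,j)=\Rule_W(k,l)$ if and only if $t(i,j)=t(k,l)$ and $\propertyfunction(i,j)=\propertyfunction(k,l)\in\mathcal{D}$, this gives the equivariance relation
\begin{equation*}
\Rule_W(P_\pi x, \pi(i), \pi(j)) = \Rule_W(x, i, j),
\qquad
\Rule_b(P_\pi x, \pi(i)) = \Rule_b(x, i).
\end{equation*}
In words: the weight index assigned to the permuted pair $(\pi(i),\pi(j))$ is the same as the one assigned to the original pair $(i,j)$.

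From this I would translate the index-level equivariance into a matrix-level conjugation identity. Using the construction~\eqref{def:WeightMatrix}, the relation above means that the weight matrix for the permuted input is obtained by simultaneously permuting rows and columns, i.e.\ $W_{\Rule_W(P_\pi x)} = P_\pi\, W_{\Rule_W(x)}\, P_\pi^{-1}$, and similarly $b_{\Rule_b(P_\pi x)} = P_\pi\, b_{\Rule_b(x)}$ from~\eqref{def:BiasVector}. Substituting into the forward map~\eqref{eq:RBL} gives
\begin{equation*}
f(P_\pi x, \weightset, \Rule)
= \sigma\!\left(P_\pi W_{\Rule_W(x)} P_\pi^{-1} \cdot P_\pi x + P_\pi b_{\Rule_b(x)}\right)
= \sigma\!\left(P_\pi\bigl(W_{\Rule_W(x)} x + b_{\Rule_b(x)}\bigr)\right),
\end{equation*}
and since $\sigma$ acts entrywise it commutes with the permutation, yielding $P_\pi f(x,\weightset,\Rule)=\pi(f(x,\weightset,\Rule))$ as claimed.

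The main obstacle is the middle step: justifying that the rule functions genuinely depend only on the isomorphism-invariant triples and are therefore insensitive to which concrete indices label the nodes. This requires stating carefully what it means for $\pi$ to act on the input to the dynamic rule (the signal $x$) versus on the graph structure that $\propertyfunction$ and $\graphlabeling$ read off, and arguing that a relabeling of node order carries the structure along so that intrinsic quantities are unchanged. Once that invariance is pinned down, the passage to the row/column permutation of $W$ and the final commutation with $\sigma$ are routine, so I expect the bulk of the proof to consist of setting up the $P_\pi$-conjugation identity precisely rather than any nontrivial computation.
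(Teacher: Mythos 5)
Your proposal is correct and takes essentially the same route as the paper's own proof: the permutation of the input induces a simultaneous row/column permutation of the weight matrix and a permutation of the bias entries, so the linear map and the entrywise activation commute with $\pi$. Your write-up is in fact more careful than the paper's, which asserts the conjugation identity $W_{\Rule_W(\pi(x))}=P_\pi W_{\Rule_W(x)}P_\pi^{-1}$ only informally (``permutes the entries \dots\ in the same way'') and does not spell out, as you do, that this rests on the rule functions depending only on the label/property triples $t(i,j)$, which are intrinsic to the graph and hence preserved under relabeling.
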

\begin{proof}
	Using the definition of rule based layers~\eqref{eq:RBL} we have that
	\[\pi(f(x, \weightset,\Rule))=\pi(\sigma(W_{\Rule_W(x)}\cdot x + b_{\Rule_b(x)}))\enspace.\]
	The entries of the weight matrix and the bias term are given by the rule functions $\Rule_W$ and $\Rule_b$, see~\eqref{eq:RBF}, which depend on the input signal $x$, i.e., the node order of the graph.
	Thus, by definition the permutation of the input signal $\pi(x)$ permutes the entries of the corresponding
	weight matrix and the bias term in the same way compared to the original input signal $x$.
	Therefore, the result of the multiplication of the permuted weight matrix with the permuted input signal is the same as the permutation of the result of the multiplication of the original weight matrix with the original input signal and it follows
	\[\pi(\sigma(W_{\Rule_W(x)}\cdot x + b_{\Rule_b(x)}))=\sigma(W_{\Rule_W(\pi(x))}\cdot \pi(x) + b_{\Rule_b(\pi(x))})=f(\pi(x), \weightset,\Rule)\]
	which completes the proof.
\end{proof}

In particular, \Cref{prop:permutation-equivariance} shows that each $l$ and $p$ as defined above gives rise to a permutation equivariant rule based layer.
Thus, finding a meaningful rule for graphs reduces to finding a meaningful labeling function $l$ and property function $p$.
In the following we focus on three different rule based layers that are based on well-known graph labeling functions.

\paragraph{Weisfeiler-Leman Layer}\label{par:wl-rule}
Recent research has shown that Weisfeiler-Leman labeling is a powerful tool for graph classification~\cite{DBLP:journals/jmlr/ShervashidzeSLMB11,DBLP:conf/aaai/0001RFHLRG19,DBLP:conf/icml/BodnarF0OMLB21,DBLP:conf/aaai/Truong024}.
Thus, we propose to use $1$-Weisfeiler-Leman labels of iteration $k$ as one option for $\graphlabeling$.
The $1$-Weisfeiler-Leman algorithm assigns in the $k$-th iteration to each node of a graph a label based on the structure of its local $k$-hop neighborhood, see~\cite{DBLP:journals/jmlr/ShervashidzeSLMB11} for the details\footnote{Usually, Weisfeiler-Leman labels are represented via strings.
For our purpose the strings are hashed to integers.
}.
For $p$ we use the distance between two nodes, i.e., $p\equiv d$ and $\mathcal{D}\subset 0\cup\mathbb{N}$ is the set of valid distances.
We denote the induced rule by $\Rule_{WL_{k, \mathcal{D}}}$.
For computational reasons in the experiments we restrict the maximum number of different Weisfeiler-Leman labels considered by some bound $L$.
We relabel the most frequent $L-1$ labels to $1,\cdots, L-1$ and set all other labels to $L$.
The corresponding layer is denoted by $f_{WL_{k,\mathcal{D},L}}$.

\paragraph{Pattern Counting Layer}
Beyond labeling nodes via the Weisfeiler-Leman algorithm, it is a common approach to use subgraph isomorphism counting to distinguish graphs~\cite{DBLP:journals/pami/BouritsasFZB23}.
This is in fact necessary as the 1-Weisfeiler-Leman algorithm is not able to distinguish some types of graphs, for example circular skip link graphs~\cite{DBLP:journals/combinatorica/CaiFI92} and strongly regular graphs~\cite{DBLP:conf/icml/BodnarF0OMLB21,DBLP:journals/pami/BouritsasFZB23}.
Thus, we propose a node labeling function $l$ based on pattern counting.
The function $p$ is the same as for the Weisfeiler-Leman layer.
In general, subgraph isomorphis counting is a hard problem~\cite{DBLP:conf/stoc/Cook71}, but for the real-world and synthetic benchmark graph datasets that are usually considered, subgraphs of size $k\in\{3,4,5,6\}$ can be enumerated in a preprocessing step in a reasonable time, see \Cref{tab:preprocessing-times}.
Given a set of patterns, say $\mathcal{P}$, we compute all possible embeddings of these patterns in the graph dataset in a preprocessing step.
Then for each pattern $P\in\mathcal{P}$ and each node $i\in V$ we count how often the node $i$ is part of an embedding of $P$.
Using those counts we define a labeling function $\graphlabeling:V\rightarrow\labelset\subseteq \mathbb{N}$ and
two nodes $i, j\in V$ are mapped to the same label if and only if their counts are equal for all patterns in $\mathcal{P}$.
Patterns that are often used in practice are small cycles, cliques, stars or paths.
We denote the corresponding rule by $\Rule_{\mathcal{P}_\mathcal{D}}$.
As for the Weisfeiler-Leman Rule we restrict the maximum number of different labels to some number $L$.
The corresponding layer is denoted by $f_{\mathcal{P}_{\mathcal{D},L}}$.

The total number of learnable parameters for a layer of type Weisfeiler-Leman or Pattern counting is bounded by $L\cdot L\cdot|\mathcal{D}|$ for the weight matrix and $|L|$ for the bias term.

\paragraph{Aggregation Layer}
In contrast to the above layers we assume that $m=M$ and $n=|V|$.
Let $\graphlabeling:V\rightarrow\labelset$ be an arbitrary labeling function, e.g., the atom labels in molecule graphs, the degree of the nodes or the Weisfeiler-Leman labels.
We require the rule function $\RuleFinal^{M}$ associated with the weight matrix to assign each pair $(n,i)$ with $i\in V$ and $n\in[M]$ an integer or zero based on $n$ and $\graphlabeling(i)$.
In fact, for each element of $\labelset$ the rule defines $M$ different learnable parameters.
The rule function $\RuleFinal^{M}$ associated with the bias is the identity, i.e., it represents an ordinary bias term with $M$ learnable parameters.
The corresponding layer is denoted by $f_{\RuleFinal^M}$.
We use this layer as output layer because its output is a fixed dimensional vector of size $M\in\mathbb{N}$ independent of the input size.

The total number of learnable parameters for the aggregation layer is bounded by $M\cdot|\labelset|$ for the weight matrix and $M$ for the bias term.

\begin{prop} The aggregation layer $f_{\RuleFinal^M}$ is permutation invariant, i.e., for any permutation $\pi$ of the nodes of $G=(V, E)$ with corresponding input signal $x$ it holds $f_{\RuleFinal^M}(\pi(x), \weightset,\Rule)=f_{\RuleFinal^M}(x, \weightset,\Rule)$.
\end{prop}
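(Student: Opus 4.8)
The plan is to reduce the claim to the elementary fact that each output coordinate of $f_{\RuleFinal^M}$ is a label-weighted sum over the nodes, and that such a sum is insensitive to the order in which the nodes are listed. First I would unfold the forward step from~\eqref{eq:RBL}. Writing the output index as $k\in[M]$ and the node index as $i\in V$, the $k$-th coordinate of $f_{\RuleFinal^M}(x,\weightset,\Rule)$ equals $\sigma\bigl(\sum_{i\in V} W_{\RuleFinal^M(x)}(k,i)\,x_i + b_k\bigr)$. The defining property of $\RuleFinal^M$ is that the entry $W_{\RuleFinal^M(x)}(k,i)$ depends only on $k$ and on the label $\graphlabeling(i)$; I would record this by writing $W_{\RuleFinal^M(x)}(k,i)=w_{k,\graphlabeling(i)}$ for a value determined solely by the pair $(k,\graphlabeling(i))$. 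Since the bias rule is the identity, $b_k$ is a fixed learnable parameter independent of $x$.

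Next I would spell out how a permutation $\pi$ acts. Permuting the nodes of $G$ permutes the entries of the signal, so $(\pi(x))_j = x_{\pi^{-1}(j)}$, and --- this is the point that needs care --- it transports the labels along with the nodes, so the labeling seen after permutation is $\graphlabeling\circ\pi^{-1}$; that is, the node now sitting at index $j$ still carries the label $\graphlabeling(\pi^{-1}(j))$. Consequently the permuted weight matrix has entries $W_{\RuleFinal^M(\pi(x))}(k,j)=w_{k,\graphlabeling(\pi^{-1}(j))}$, while the bias is unchanged.

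The key step is then a change of summation index. For each fixed $k$ the $k$-th coordinate of $f_{\RuleFinal^M}(\pi(x),\weightset,\Rule)$ is
\[
\sigma\!\Bigl(\sum_{j\in V} w_{k,\graphlabeling(\pi^{-1}(j))}\,x_{\pi^{-1}(j)} + b_k\Bigr)
= \sigma\!\Bigl(\sum_{i\in V} w_{k,\graphlabeling(i)}\,x_i + b_k\Bigr),
\]
where the equality follows by substituting $i=\pi^{-1}(j)$ and noting that $i$ ranges over all of $V$ as $j$ does. The right-hand side is exactly the $k$-th coordinate of $f_{\RuleFinal^M}(x,\weightset,\Rule)$. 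Since this holds for every $k\in[M]$ and $\sigma$ is applied coordinatewise, the two output vectors coincide, which is the asserted invariance.

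I expect the only real obstacle to be the bookkeeping around the labeling: one must use that $\graphlabeling$ is an intrinsic node attribute that moves with the nodes under $\pi$, so that each summand keeps its own weight when the sum is reindexed. The structural reason the conclusion here is invariance rather than the equivariance of \Cref{prop:permutation-equivariance} is that the output index set $[M]$ is fixed and independent of the graph, so $\pi$ does not act on the output space --- reindexing the node-sum therefore returns the identical vector rather than a permuted one.
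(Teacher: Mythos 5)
Your proof is correct and takes essentially the same route as the paper's: both rest on the observation that a node permutation acts only on the node-indexed dimension of $W_{\RuleFinal^M(x)}$ (with labels transported along with the nodes) while leaving the output-indexed bias untouched, so the matrix--vector product, and hence the layer output, is unchanged. Your coordinate-level reindexing $i=\pi^{-1}(j)$ is just the explicit form of the paper's one-line argument, and is in fact the more careful rendering: the paper loosely says node permutations permute the \emph{rows} of the weight matrix, whereas your computation correctly identifies that it is the columns (the node index) that are permuted in step with the entries of the input signal.
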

\begin{proof} Using the definitions of the aggregation rule,~\eqref{def:WeightMatrix} and~\eqref{def:BiasVector} it follows that node permutations permute the rows
    of the weight matrix and thus have no effect on the bias vector.
    In fact, permutations of the rows of the weight matrix do not change the result of the multiplication of the weight matrix with the input signal.
    Thus, the result of the aggregation layer is invariant under permutations of the nodes of the graph.
\end{proof}

\subsection{Rule Graph Neural Networks (RuleGNNs)}\label{subsec:rule-gnns}

The layers defined above are the building blocks of RuleGNNs.
Each RuleGNN is a concatenation of different rule based layers from type \textit{Weisfeiler-Leman} and \textit{Pattern Counting} with different parameters followed by an \textit{Aggregation Layer}.
The input of the network is a signal $x\in\mathbb{R}^{|V|}$ corresponding to a graph $G=(V, E)$.
We note that for simplicity we focus on one-dimensional signals but our approach also allows multi-dimensional signals, i.e., $x\in\mathbb{R}^{|V|\times d}$.
The output of the network is a vector of fixed size $M\in\mathbb{N}$ determined by the aggregation rule where $M$ is usually the number of classes of the graph classification task.
The output can be also used as an intermediate vectorial representation of the graph or for regression tasks.
Note that RuleGNNs can be also used for node classification tasks by setting $M=|V|$ or by omitting the aggregation layer.

\begin{thm}[Expressive Power of RuleGNNs]\label{thm:expressivity-rulegnn}
    For each pair of non-isomorphic graphs $G$ and $G'$ there exists a RuleGNN $f(-,\weightset,\ruleset)$
	that can distinguish $G$ and $G'$.
\end{thm}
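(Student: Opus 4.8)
The plan is to reduce the statement to a claim about node \emph{label histograms} and then realize it with a short RuleGNN evaluated on the constant input signal $x=\mathbf{1}\in\mathbb{R}^{|V|}$; I say the network distinguishes $G$ from $G'$ if the two output vectors differ. The key observation driving the reduction is that an aggregation layer collapses a constant signal into a weighted histogram of labels. Indeed, since $\RuleFinal^{M}$ places the weight indexed by $(n,\graphlabeling(i))$ at entry $(n,i)$ of its weight matrix, the $n$-th output coordinate on input $\mathbf{1}$ equals $\sum_{\lambda\in\labelset} h(\lambda)\,w_{(n,\lambda)}$, where $h(\lambda)=|\{i\in V:\graphlabeling(i)=\lambda\}|$ counts the nodes carrying label $\lambda$. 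Consequently, if the labeling used by the network produces different histograms $h\neq h'$ on $G$ and $G'$, the two histogram vectors are separated by some linear functional, i.e.\ by some admissible choice of the learnable weights $w_{(n,\lambda)}$, so the outputs differ. It therefore suffices to exhibit, for the given pair $G\not\cong G'$, a labeling realizable by a RuleGNN layer whose induced histograms on $G$ and $G'$ differ.

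For the labeling I would use pattern counting. Here I invoke the classical fact that the family of subgraph counts $(\mathrm{sub}(P,\cdot))_P$, ranging over all graphs $P$, is a complete isomorphism invariant (this follows from Lov\'asz's theorem on homomorphism counts, equivalently by M\"obius inversion over the subgraph lattice; when $|V(G)|\neq|V(G')|$ already $P=K_1$ separates them). Hence there is a single pattern $P$ with $\mathrm{sub}(P,G)\neq\mathrm{sub}(P,G')$. Let $c_i$ be the number of embeddings of $P$ containing node $i$, as used by the pattern counting layer, and let $\graphlabeling$ be the induced labeling, using the raw count values as labels so that the scheme is consistent across $G$ and $G'$. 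Summing over nodes, $\sum_i c_i=|V(P)|$ times the number of embeddings of $P$, which is a fixed positive multiple of $\mathrm{sub}(P,\cdot)$; therefore $\sum_i c_i$ differs between $G$ and $G'$. Since the histogram of the $P$-counts determines their sum, the histograms of $G$ and $G'$ must differ as well. Because only finitely many distinct count values occur in $G$ and $G'$, I take the label bound $L$ large enough that no merging of labels happens, so this distinction survives in $f_{\mathcal{P}_{\mathcal{D},L}}$.

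It remains to assemble the RuleGNN. I take a pattern counting layer $f_{\mathcal{P}_{\mathcal{D}}}$ with property function $\propertyfunction\equiv d$ and $\mathcal{D}=\{0\}$, so that its weight matrix is diagonal and the output signal is $y_i=\sigma(w_{\graphlabeling(i)}+b_{\graphlabeling(i)})$; nodes of equal label receive equal values, so $y$ encodes the labeling. Following it with an aggregation layer $f_{\RuleFinal^{M}}$ that sums its input (trivial labeling) yields an output with coordinates $\sum_{\lambda} h(\lambda)\,\sigma(w_{\lambda}+b_{\lambda})$; by the histogram computation above and the separation remark of the first paragraph, the weights can be chosen so that this value differs on $G$ and $G'$, and the network distinguishes them. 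Alternatively one may feed the pattern-count labeling directly into the aggregation layer, whose labeling function is permitted to be arbitrary, saving one layer.

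I expect the main obstacle to be the passage from the \emph{global} invariant to a \emph{locally readable} one: establishing that $\sum_i c_i$ is proportional to $\mathrm{sub}(P,\cdot)$ and hence that differing total counts force differing label histograms. Everything else — the completeness of subgraph counts (an off-the-shelf fact), the linear separability of distinct histograms, and the diagonal/summation wiring — is routine, so the argument hinges on that single bridge together with the essential use of pattern counting rather than Weisfeiler--Leman labeling, which is not a complete invariant.
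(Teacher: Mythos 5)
Your proof is correct, and its network construction coincides with the paper's: evaluate an aggregation layer on the constant signal $\mathbf{1}$, observe that its output is a weighted histogram of node labels, and note that distinct histograms can be separated by some choice of the learnable weights (the paper uses exactly your ``alternative'' one-layer variant, $f_{\RuleFinal^M}$ with $M=1$). Where you genuinely diverge is in how the distinguishing labeling is obtained. The paper defers this entirely to a citation: it takes $\graphlabeling$ to be the $(k+1)$-WL labels, with $k$ the maximum treewidth of $G$ and $G'$, and invokes Dvo\v{r}\'ak's result that these distinguish non-isomorphic graphs by counting label occurrences. You instead use the pattern-counting labeling that the paper itself defines as a layer type: completeness of subgraph counts (Lov\'asz) gives a pattern $P$ with $\mathrm{sub}(P,G)\neq\mathrm{sub}(P,G')$, and your double-counting bridge $\sum_i c_i = |V(P)|\cdot \mathrm{sub}(P,\cdot)$ converts this global discrepancy into a discrepancy of node-level count histograms --- exactly the ``locally readable'' form the aggregation layer needs, and indeed the one step on which your argument hinges, as you correctly identify. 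Your route buys self-containedness (the bridge is elementary, and even Lov\'asz can be replaced by the observation that after separating on $|V|$ and $|E|$ one may take $P=G$ itself) and stays within the labelings actually implemented by the paper's layers, whereas the paper's route buys an explicit complexity bound (treewidth controls which WL dimension suffices) at the price of an external theorem and of node labels extracted from a tuple-coloring algorithm, a step the paper leaves implicit. One small wrinkle in your two-layer assembly: the coefficients $\sigma(w_\lambda+b_\lambda)$ range only over the image of $\sigma$, not over all of $\mathbb{R}$, so the linear-separation argument additionally needs $\sigma$ to be non-constant (true for tanh, which the paper uses); your one-layer alternative, which is the paper's construction, avoids this issue entirely.
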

\begin{proof}
    The expressive power of the RuleGNNs is based on the expressive power of the underlying labeling function $\graphlabeling$.
	Indeed, we will show that a RuleGNN is at least as powerful as the labeling function $\graphlabeling$.
    Let $l$ be a labeling function that can distinguish $G$ and $G'$ by counting the occurrences of the labels, e.g., the $(k+1)$-WL labels where $k$ is the maximum of the treewidths of $G$ and $G'$~\cite{DBLP:journals/jgt/Dvorak10}.
    Now, consider the RuleGNN that consists only of the aggregation layer $f_{\RuleFinal^M}$ with $M=1$ based on the labeling function $l$.
    Without loss of generality we assume that each entry of the input signals $x$ resp. $y$ corresponding to $G$ resp. $G'$ is equal to $1$.
    Then $f_{\RuleFinal^M}(x)$ resp. $f_{\RuleFinal^M}(y)$ is equal to the sum of the learnable parameters corresponding to the labels of the nodes of $G$ resp. $G'$.
    By assumption $l$ can distinguish $G$ and $G'$ by counting the occurrences of the labels and hence also the above defined RuleGNN can distinguish $G$ and $G'$.
\end{proof}

\subsection{Example: RuleGNNs for Molecule Graphs}\label{subsec:example-molecule-graphs}
	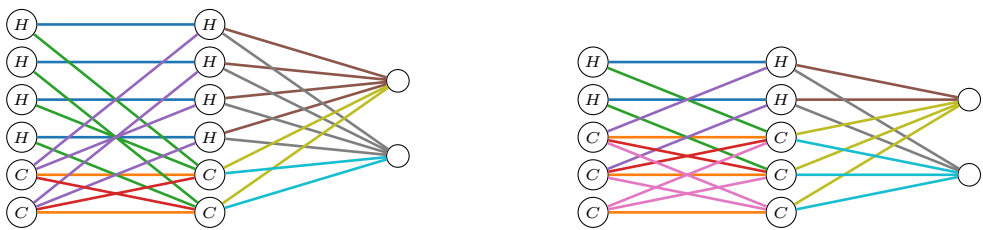
\begin{figure}[t]
		\centering

				\scalebox{1}[1]{
		\begin{subfigure}{0.45\textwidth}
	\begin{tikzpicture}[scale = 0.5]\small
	\node[circle,inner sep=0pt, minimum size=0.4cm, draw=black] (a) at (0,0) {\tiny $H$};
	\node[circle,inner sep=0pt, minimum size=0.4cm, draw=black] (b) at (0,-1) {\tiny $H$};
	\node[circle,inner sep=0pt, minimum size=0.4cm, draw=black] (c) at (0,-2) {\tiny $H$};
	\node[circle,inner sep=0pt, minimum size=0.4cm, draw=black] (d) at (0,-3) {\tiny $H$};
	\node[circle,inner sep=0pt, minimum size=0.4cm, draw=black] (e) at (0,-4) {\tiny $C$};
	\node[circle,inner sep=0pt, minimum size=0.4cm, draw=black] (f) at (0,-5) {\tiny $C$};

	\node[circle,inner sep=0pt, minimum size=0.4cm, draw=black] (g) at (5,0) {\tiny $H$};
	\node[circle,inner sep=0pt, minimum size=0.4cm, draw=black] (h) at (5,-1) {\tiny $H$};
	\node[circle,inner sep=0pt, minimum size=0.4cm, draw=black] (i) at (5,-2) {\tiny $H$};
	\node[circle,inner sep=0pt, minimum size=0.4cm, draw=black] (j) at (5,-3) {\tiny $H$};
	\node[circle,inner sep=0pt, minimum size=0.4cm, draw=black] (k) at (5,-4) {\tiny $C$};
	\node[circle,inner sep=0pt, minimum size=0.4cm, draw=black] (l) at (5,-5) {\tiny $C$};

	\draw[tab_blue, line width = 1pt] (a)--(g) (b)--(h) (c)--(i)  (d)--(j) ;
	\draw[tab_orange, line width = 1pt] (e)--(k) (f)--(l);
	\draw[tab_green, line width = 1pt] (a)--(k) (b)--(l) (c)--(k) (d)--(l);
	\draw[tab_purple, line width = 1pt]  (e)--(i) (e)--(g) (f)--(j) (f)--(h);
	\draw[tab_red, line width = 1pt]    (e)--(l) (f)--(k) ;

	\node[circle,inner sep=0pt, minimum size=0.3cm, draw=black] (y1) at (10,-1.5) {};
	\node[circle,inner sep=0pt, minimum size=0.3cm, draw=black] (y2) at (10,-3.5) {};

	\draw[tab_brown, line width = 1pt]  (y1)--(g) (y1)--(h) (y1)--(i) (y1)--(j);
	\draw[tab_gray, line width = 1pt]  (y2)--(g) (y2)--(h) (y2)--(i) (y2)--(j);
	\draw[tab_olive, line width = 1pt]    (y1)--(l) (y1)--(k);
	\draw[tab_cyan, line width = 1pt]    (y2)--(l) (y2)--(k);

	\end{tikzpicture}
	\end{subfigure}}\hfill
						\scalebox{1}[1]{
	\begin{subfigure}{0.45\textwidth}
	\begin{tikzpicture}[scale = 0.5]\small
	\node[circle,inner sep=0pt, minimum size=0.4cm, draw=black] (a) at (0,0) {\tiny $H$};
	\node[circle,inner sep=0pt, minimum size=0.4cm, draw=black] (b) at (0,-1) {\tiny $H$};
	\node[circle,inner sep=0pt, minimum size=0.4cm, draw=black] (c) at (0,-2) {\tiny $C$};
	\node[circle,inner sep=0pt, minimum size=0.4cm, draw=black] (d) at (0,-3) {\tiny $C$};
	\node[circle,inner sep=0pt, minimum size=0.4cm, draw=black] (e) at (0,-4) {\tiny $C$};

	\node[circle,inner sep=0pt, minimum size=0.4cm, draw=black] (g) at (5,0) {\tiny $H$};
	\node[circle,inner sep=0pt, minimum size=0.4cm, draw=black] (h) at (5,-1) {\tiny $H$};
	\node[circle,inner sep=0pt, minimum size=0.4cm, draw=black] (i) at (5,-2) {\tiny $C$};
	\node[circle,inner sep=0pt, minimum size=0.4cm, draw=black] (j) at (5,-3) {\tiny $C$};
	\node[circle,inner sep=0pt, minimum size=0.4cm, draw=black] (k) at (5,-4) {\tiny $C$};

	\draw[tab_blue,solid, line width = 1pt] (a)--(g);
	\draw[tab_blue,solid, line width = 1pt] (b)--(h);
	\draw[tab_orange, line width = 1pt] (c)--(i);
	\draw[tab_orange, line width = 1pt] (d)--(j);
	\draw[tab_orange, line width = 1pt] (e)--(k);
	\draw[tab_green, line width = 1pt] (a)--(i);
	\draw[tab_green, line width = 1pt] (b)--(j);
	\draw[tab_purple, line width = 1pt]  (c)--(g);
	\draw[tab_purple, line width = 1pt] (d)--(h);
	\draw[tab_red, line width = 1pt] (c)--(j);
	\draw[tab_red, line width = 1pt] (d)--(i);

	\draw[tab_pink, line width = 1pt] (c)--(k);
	\draw[tab_pink, line width = 1pt] (d)--(k);
	\draw[tab_pink, line width = 1pt] (e)--(i);
	\draw[tab_pink, line width = 1pt] (e)--(j);

	\node[circle,inner sep=0pt, minimum size=0.3cm, draw=black] (y1) at (10,-1) {};
	\node[circle,inner sep=0pt, minimum size=0.3cm, draw=black] (y2) at (10,-3) {};

	\draw[tab_brown, line width = 1pt]  (y1)--(g) (y1)--(h);
	\draw[tab_gray, line width = 1pt]  (y2)--(g) (y2)--(h);
	\draw[tab_olive, line width = 1pt]   (y1)--(k) (y1)--(i) (y1)--(j);
	\draw[tab_cyan, line width = 1pt]   (y2)--(k) (y2)--(i) (y2)--(j);
	\end{tikzpicture}
\end{subfigure}}
\caption{\label{fig:RuleGNNExample}
	Information propagation in a simple two layer RuleGNN based on the molecule graphs of ethylene (left) and cyclopropenylidene (right) and the rules $\RuleMol$ \eqref{eq:RuleMol} and $\RuleFinal^k$ \eqref{eq:RuleMolFinal}.
	The input signal is propagated from left to right. The graph nodes represent the neurons of the neural network.
	Edges of the same color denote shared weights in a layer.
}
\end{figure}
Assume the task is to learn a property of a molecule based on its graph structure.
In this example we present a RuleGNN that is a concatenation of two very simple rule based layers.
The advantage of rule based layers and hence also RuleGNNs is that they encode the graph structure (in this example the structure of two molecules) directly into the neural network.
Moreover, the input data can be arbitrary molecule graphs and the output is a vector of fixed size $k=2$ that encodes the property of the molecule or some intermediate vectorial representation.
In this example we consider the molecule graphs of ethylene and cyclopropenylidene given in Figure~\ref{fig:MoleculeGraph} together with their corresponding input signals $x\in\mathbb{R}^6$ and $y\in\mathbb{R}^5$.
The atoms of the molecules (hydrogen $H$ and carbon $C$) correspond to the nodes of a graph and the bonds to the edges.
The atom labels and the bond types  (\textit{single} and \textit{double}) can be seen as additional information $\mathbf{I}$ that is known about the input samples.
The graph nodes are indexed via integers in some arbitrary but fixed order and the atoms corresponding to the graph nodes are given by the labeling function $l:V\rightarrow\{H, C\}$.

\begin{figure}[t]
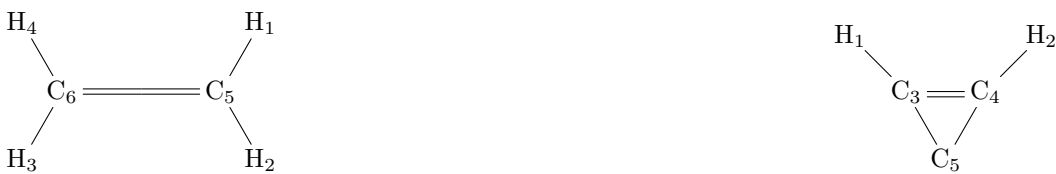
\centering
	\chemfig{{C_6}(==[:0]{C_5}(-[:60]{H_1})(-[:-60]{H_2}))(-[:-120]{H_3})(-[:120]{H_4})}
	\hfill
	\chemfig{[:-30]{C_3}(-[:135]{H_1})*3(-{C_5}-{C_4}(-[:45]{H_2})=)}
	\caption{\label{fig:MoleculeGraph}Molecule graphs of ethylene (left) and cyclopropenylidene (right). The indices denote the order of the nodes.}
\end{figure}

The RuleGNN consists of two rule based layers $f_1(-,\weightset_1,\RuleMol)$ and $f_2(-,\weightset_2,\RuleFinal^2)$ with learnable parameters $\weightset_1=\{w_1, \ldots, w_6\}$ and $\weightset_2=\{w'_1, \ldots, w'_4\}$
and the following rule functions $\RuleMol$ and $\RuleFinal^2$.
For some graph $G=(V, E)$ and its corresponding input signal $z$ we define $\RuleMol$ as follows:
\begin{align}\label{eq:RuleMol}
	\begin{array}{cccl}
\RuleMol(z):&[|V|]\times [|V|]&\longrightarrow& \{0\}\cup[6]\\\\
&(i, j)&\mapsto&\begin{cases}
1& \text{if } i=j \text{ and } l(i)=H\\
2&\text{if } i=j \text{ and }  l(i)=C \\
3&\text{if } (i,j) \text{ is an edge (-), } l(i)=H, l(j)=C\\
4&\text{if } (i,j) \text{ is an edge (-),} l(i)=C, l(j)=H\\
5&\text{if } (i,j) \text{ is an edge (-),} l(i)=l(j)=C\\
6&\text{if } (i,j) \text{ is an edge (=),} l(i)=l(j)=C\\
0& \text{o.w.}
\end{cases}
\end{array}
\end{align}

For some graph $G=(V, E)$ and its corresponding input signal $z$ we define $\RuleFinal$ as follows:
\begin{align}\label{eq:RuleMolFinal}
	\begin{array}{cccl}
\RuleFinal^2(z):&[2]\times [|V|]&\longrightarrow& \{0\}\cup[4]\\\\
&(i, j)&\mapsto&\begin{cases}
 i&  l(j)=H\\
 i+2&  l(j)=C \\
0& \text{o.w.}
\end{cases}
\end{array}
\end{align}

Note that $\RuleMol$ and $\RuleFinal^2$ are not restricted to the two molecules from above but can be applied to arbitrary molecule graphs.
Indeed, applying it to molecules with atom labels different from $H$ or $C$ makes the rules less powerful, i.e., it should be adapted to the type of molecules.
Using the definition~\eqref{def:WeightMatrix} of weight distribution defined by the rule function we can construct the weight matrices $W_{\RuleMol(x)}, W_{\RuleFinal^2(x)}$ for the ethylene graph and $W_{\RuleMol(y)}, W_{\RuleFinal(y)}$ for the cyclopropenylidene graph as follows:

\begin{table}[h]
\begin{tabular}{cc}
	$W_{\RuleMol(x)}= \begin{psmallmatrix}
	w_1&0&0&0&w_3&0\\
	0&w_1&0&0&w_3&0\\
	0&0&w_1&0&0&w_3\\
	0&0&0&w_1&0&w_3\\
	w_4&w_4&0&0&w_2&w_5\\
	0&0&w_4&w_4&w_5&w_2\\
	\end{psmallmatrix}$
	&

		$W_{\RuleFinal^2(x)}= \begin{psmallmatrix}
	w'_1&w'_1&w'_1&w'_1&w'_3&w'_3\\
	w'_{2}&w'_{2}&w'_{2}&w'_{2}&w'_{4}&w'_{4}\\
	\end{psmallmatrix}$
		\\&\\
	$W_{\RuleMol(y)}= \begin{psmallmatrix}
	w_1&0&w_3&0&0\\
	0&w_1&0&w_3&0\\
	w_4&0&w_2&w_6&w_5\\
	0&w_3&w_6&w_2&w_5\\
	0&0&w_5&w_5&w_2\\
	\end{psmallmatrix}
	$&
	$W_{\RuleFinal^2(y)}= \begin{psmallmatrix}
	w'_1&w'_1&w'_3&w'_3&w'_3\\
	w'_{2}&w'_{2}&w'_{4}&w'_{4}&w'_{4}\\
	\end{psmallmatrix}$

\end{tabular}
\end{table}

Combining the two rule based layers we obtain the RuleGNN and the forward propagation is given by $\sigma(W_{\RuleFinal(x)}\cdot \sigma(W_{\RuleMol(x)}\cdot x))$ for the ethylene graph and $\sigma(W_{\RuleFinal(y)}\cdot \sigma(W_{\RuleMol(y)}\cdot y))$ for the cyclopropenylidene graph.

Note that the forward propagation of the layer corresponding to the rule $\RuleMol$ is kind of a multiplication with a weighted adjacency matrix of the graph where the weights of the adjacency matrix are given by the learnable parameters, see also \Cref{fig:RuleGNNExample}.
In contrast to adjacency matrices the weight matrix is not necessary symmetric.
The computation graph induced by the weight matrices exactly represent the graph structure while the edge weights are shared across the network using the rule, see \Cref{fig:RuleGNNExample}.
Note that also edge labels (e.g., atomic bonds) can be taken into account by increasing the size of the weight set.
Moreover, it is possible to include bigger neighborhoods, i.e., all nodes reachable by $k$-hops.
Of course using other information of the graph (e.g., substructures (such as circles or cliques), node degrees, connections not depicted by edges) more complicated rules such as the Weisfeiler-Leman rule and Pattern Counting rules can be used.

\section{Experiments}\label{sec:Experiments}
We evaluate the performance of RuleGNNs on different real-world and synthetic benchmark graph dataset and compare the results to state-of-the-art algorithms .
For comparability and reproducibility of the results, we make use of the experimental setup from~\cite{Errica2019AFC}\footnote{See \href{https://github.com/fseiffarth/gnn-comparison}{https://github.com/fseiffarth/gnn-comparison} for the results of the state-of-the-art algorithms.}.
For each graph dataset we perform a $10$-fold cross validation, i.e., we use fixed splits\footnote{See \href{https://github.com/fseiffarth/RuleGNNCode}{https://github.com/fseiffarth/RuleGNNCode} for the data splits.} of the dataset into $10$ equally sized parts, and use $9$ of them for training, parameter tuning and validation.
We then use the model that performs best on the validation set and report the performance on the previously unseen test set.
We average three runs of the best model to decrease random effects.
The standard deviation reported in the tables is computed over the results on the $10$ folds.

\paragraph{Data and Competitors Selection}
A problem of several heavily used graph benchmark datasets like MUTAG or PTC~\cite{TUDortmund} is that node and edge labels seems to be more important than the graph structure itself, i.e., there is no significant improvement over simple baselines~\cite{Schulz2019OnTN}.
Moreover, in case of MUTAG the performance of the model is highly dependent on the data split because of the small number of samples.
Thus, in this work for benchmarking we choose DHFR, Mutagenicity, NCI1, NCI109, IMDB-BINARY and IMDB-MULTI from~\cite{TUDortmund} because the structure of the graphs seems to play an important role, i.e., simple baselines~\cite{Errica2019AFC,Schulz2019OnTN} are significantly worse than more evolved algorithms.
Additionally, we consider circular skip link graphs CSL~\cite{DBLP:journals/combinatorica/CaiFI92} and some new synthetic benchmark graph datasets called LongRings, EvenOddRings and Snowflakes \cite{naik2024iterative} to show that RuleGNNs can overcome limitations of ordinary graph neural networks.
For more details on the datasets see \Cref{subsec:details-on-the-datasets}.
For NCI1, IMDB-BINARY and IMDB-MULTI we use the same splits as in~\cite{Errica2019AFC} and for CSL we use the splits as in~\cite{DBLP:journals/jmlr/DwivediJL0BB23} and a $5$-fold cross validation.
We evaluate the performance of the RuleGNNs on these datasets and compare the results to the baselines from~\cite{Errica2019AFC} and~\cite{Schulz2019OnTN} and the Weisfeiler-Leman subtree kernel (WL-Kernel)~\cite{DBLP:journals/jmlr/ShervashidzeSLMB11} which is one of the best performing graph classification algorithm besides graph neural networks.
For comparison with state-of-the-art graph classification algorithms we follow~\cite{Errica2019AFC} and compare to DGCNN~\cite{DBLP:conf/aaai/ZhangCNC18}, GIN \cite{DBLP:conf/iclr/XuHLJ19} and GraphSAGE \cite{Hamilton2017InductiveRL}.
Additionally, we compare to the results of some recent state-of-the-art graph classification algorithms~\cite{DBLP:conf/nips/BodnarFOWLMB21,DBLP:conf/icml/BodnarF0OMLB21,DBLP:journals/pami/BouritsasFZB23,DBLP:conf/aaai/Truong024}.
For the latter we use the results from the respective papers that might be obtained with different splits of the datasets and another evaluation setup.

\paragraph{Experimental Settings and Resources}
All experiments were conducted on a AMD Ryzen 9 7950X 16-Core Processor with $128$ GB of RAM.
For the competitors we use the implementations from~\cite{Errica2019AFC}\footnote{See \href{https://github.com/fseiffarth/gnn-comparison}{https://github.com/fseiffarth/gnn-comparison} for the code.}.
For the real-world datasets we tested different rules and combinations of the layers defined in~\Cref{subsec:graph-rules}.
More details on the tested hyperparameters can be found in~\Cref{tab:hyperparameters}.
We always use tanh for activation and the Adam optimizer~\cite{DBLP:journals/corr/KingmaB14} with a learning rate of $0.05$ (real-world datasets) resp. $0.1$ (synthetic datasets).
For the real-world datasets the learning rate was decreased by a factor of $0.5$ after each $10$ epochs.
For the loss function we use the cross entropy loss.
All models are trained for $50$ (real-world) resp. $200$ (synthetic) epochs and the batch size was set to $128$.
We stopped the training if the validation accuracy did not improve for $25$ epochs.

\subsection{Dataset Details}\label{subsec:details-on-the-datasets}
In this section we provide additional details on the datasets used in the experiments.
Tables~\ref{tab:real-world-datasets} and~\ref{tab:synthetic-datasets} provide an overview of the real-world and synthetic datasets.
We consider four synthetic datasets.
The CSL dataset is from~\cite{DBLP:journals/combinatorica/CaiFI92}.
We constructed the others to test the ability for detecting long range dependencies and the expressive power beyond the 1-WL test.

\begin{table}[t]
	\centering
\tiny
    \begin{tabular}{lr|rrr|rrr|rrr|rr}
        \toprule
        Dataset & \#Graphs &  \multicolumn{3}{c}{\#Nodes} & \multicolumn{3}{c}{\#Edges} & \multicolumn{3}{c}{Diameter} & \#Node Labels & \#Classes \\
         &  & max & avg & min & max & avg & min & max & avg & min & & \\
        \midrule

        NCI1 & 4\,110 & 111 & 29.9 & 3 & 119 & 32.3 & 2 & 45 & 11.5 & 0 & 37 & 2 \\
        NCI109 & 4\,127 & 111 & 29.7 & 4 & 119 & 32.1 & 3 & 61 & 11.3 & 0 & 38 & 2 \\
        Mutagenicity & 4\,337 & 417 & 30.3 & 4 & 112 & 30.8 & 3 & 41 & 6.3 & 0 & 14 & 2 \\
        DHFR & 756 & 71 & 42.4 & 20 & 73 & 44.5 & 21 & 22 & 14.6 & 8 & 9 & 2 \\
        IMDB-BINARY & 1\,000 & 136 & 19.8 & 12 & 1249 & 96.5 & 26 & 2 & 1.9 & 1 & 1 & 2 \\
        IMDB-MULTI & 1\,500 & 89 & 13.0 & 7 & 1467 & 65.9 & 12 & 2 & 1.5 & 1 & 1 & 3 \\
        \bottomrule
    \end{tabular}
    \caption{\label{tab:real-world-datasets}
    Details of the real-world datasets \cite{TUDortmund} used in the experiments.}
\end{table}

\paragraph{LongRings} The dataset consists of $1200$ cycles of $100$ nodes each and is designed to test the ability to detect long range dependencies.
Four of the cycle nodes are labeled by $1, 2, 3, 4$ and all others by $0$.
The distance between each pair of the four nodes is exactly $25$ or $50$.
The label of the graph is $0$ if $1$ and $2$ have distance $50$, $1$ if $1$ and $3$ have distance $50$ and $2$ if $1$ and $4$ have distance $50$.
There are $400$ graphs per class.
The difficulty of the classification task is that information has to be propagated over a long distance.
Regarding RuleGNNs this is very easy as we can define an appropriate rule.

\paragraph{EvenOddRings} The dataset consists of $1200$ cycles of $16$ nodes each and is designed to test the ability to encode expert knowledge in the neural network architecture.
The nodes in each graph are labeled from $0$ to $15$.
The graph label is determined by labels of the nodes that have distance $8$ respectively $4$ to the node with label $0$.
We denote them by $x$ resp. $y, z$.
We have four cases: $x$ is even and $y+z$ is even, $x$ is even and $y+z$ is odd, $x$ is odd and $y+z$ is even, $x$ is odd and $y+z$ is odd.
There are $300$ graphs per class, i.e., each of the four cases.
The expert knowledge we use is that the information has to be collected from nodes of distance $8$ and $4$ only.

\paragraph{EvenOddRingsCount} The dataset consists of the same graphs as EvenOddRings but the graph labels are different.
For all nodes and their opposite node (distance 8) in the circle the sum of the labels is computed.
If there are more even sums than odd sums the graph is labeled by $0$ and by $1$ otherwise.
There are $600$ graphs per class.
The expert knowledge we use is the information that only distance $8$ is relevant.

\paragraph{Snowflakes} The dataset consists of graphs proposed by~\cite{naik2024iterative} that are not distinguishable by the 1-WL test, see~\Cref{fig:snowflakes-plot} for an example.
The dataset consists of circles of length $3$ to $12$ and at each circle node a graph from $M_0, M_1, M_2$ or $M_3$ is attached, see~\Cref{fig:m-plot} and~\cite{naik2024iterative} for the details.
$M_0, M_1, M_2$ and $M_3$ are non-isomorphic graphs that are not distinguishable by the 1-WL test.
One node in the circle is labeled by $1$ and all other nodes are labeled by $0$.
The label of the graph is determined by the graph $M_0, M_1, M_2$ or $M_3$ that is attached to the circle node with label $1$.

\begin{figure}[t]
	\centering
    {\includegraphics[width=\textwidth]{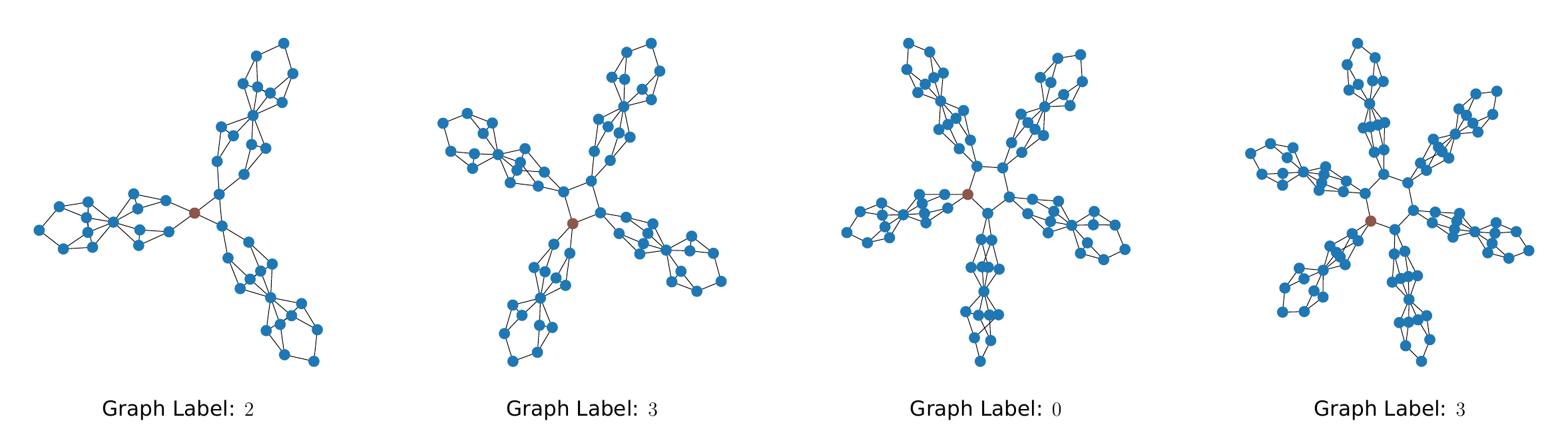}}
    \caption{\label{fig:snowflakes-plot}
    Example graphs from the \textit{Snowflakes} dataset. The brown node in the circle is labeled by $1$ and the other nodes by $0$.
    The label of the graph is determined by the subgraph attached to the brown node.
    }
\end{figure}

\begin{figure}[t]\centering
    {\includegraphics[width=0.5\textwidth]{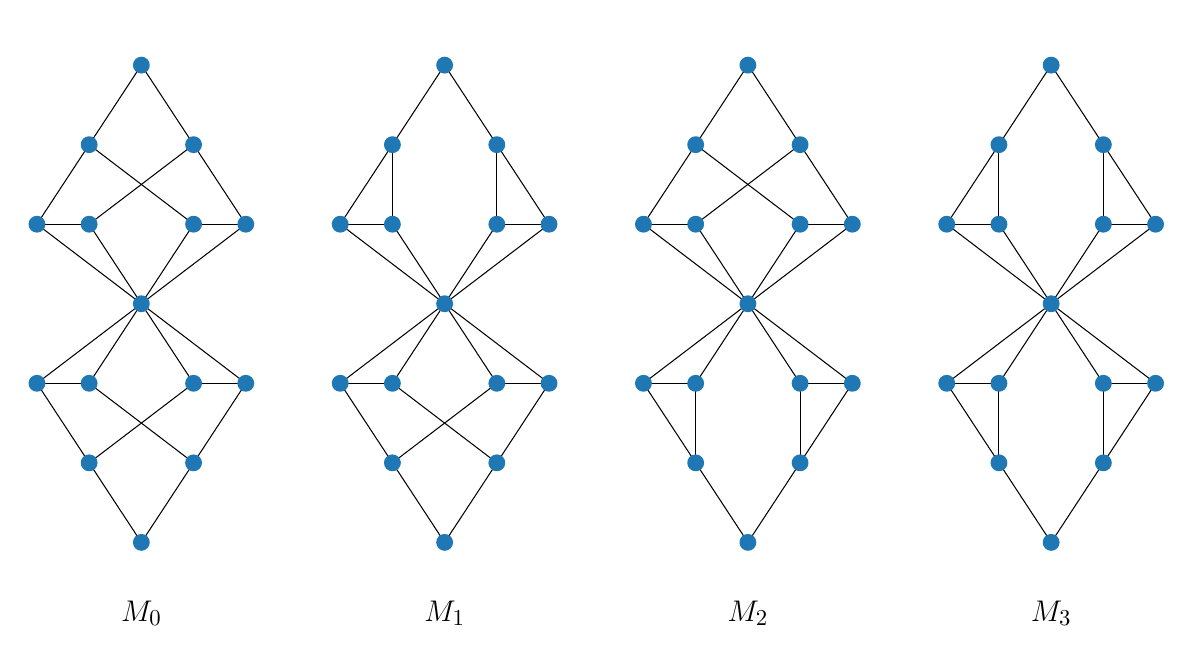}}
    \caption{\label{fig:m-plot}
    The graphs $M_0, M_1, M_2$ and $M_3$ \cite{naik2024iterative} that are not distinguishable by the 1-WL test.}
\end{figure}

\begin{table}[t]
	\centering
\tiny
    \begin{tabular}{lr|rrr|rrr|rrr|rr}
        \toprule
        Dataset & \#Graphs &  \multicolumn{3}{c}{\#Nodes} & \multicolumn{3}{c}{\#Edges} & \multicolumn{3}{c}{Diameter} & \#Node Labels & \#Classes \\
         &  & max & avg & min & max & avg & min & max & avg & min & & \\
        \midrule
        LongRings & 1\,200 & 100 & 100.0 & 100 & 100 & 100.0 & 100 & 50 & 50.0 & 50 & 5 & 3 \\
        EvenOddRings & 1\,200 & 16 & 16.0 & 16 & 16 & 16.0 & 16 & 8 & 8.0 & 8 & 16 & 4 \\
        EvenOddRingsCount & 1\,200 & 16 & 16.0 & 16 & 16 & 16.0 & 16 & 8 & 8.0 & 8 & 16 & 2 \\
        CSL \cite{DBLP:journals/combinatorica/CaiFI92} & 150 & 41 & 41.0 & 41 & 82 & 82.0 & 82 & 10 & 6.0 & 4 & 1 & 10 \\
        Snowflakes & 1\,000 & 180 & 112.5 & 45 & 300 & 187.5 & 75 & 18 & 15.5 & 13 & 2 & 4 \\

        \bottomrule
    \end{tabular}
    \caption{\label{tab:synthetic-datasets}
    Details of the synthetic datasets used in the experiments.}
\end{table}

\subsection{Results}\label{subsec:results}

\paragraph{Real-World Datasets} The results on the real-world datasets (\Cref{tab:real-world}) show that RuleGNNs are able to outperform the state-of-the-art graph classification algorithms
in the setting of~\cite{Errica2019AFC} even if we add all the additional label information that RuleGNNs use to the input features of the graph neural networks (see the (features) results in~\Cref{tab:real-world}).
This shows that the structural encoding of the additional label information is crucial for the performance of the graph neural networks and not replaceable by using additional input features.
Moreover, the results show that the Weisfeiler-Leman subtree kernel~\cite{DBLP:journals/jmlr/ShervashidzeSLMB11}
is the best performing graph classification algorithm on NC1, NCI109 and Mutagenicity.
For IMDB-BINARY and IMDB-MULTI our approach performs worse than the state-of-the-art graph classification algorithms that are not evaluated within the same experimental setup.
This might be the result of different splits of the datasets and the different evaluation setup or the fact that we have not found the best rule for these datasets.

\paragraph{Synthetic Datasets} The results on the synthetic benchmark graph dataset (\Cref{tab:synthetic}) show that the expressive power of RuleGNNs is higher than that of the standard message passing model.
Moreover, the integration of expert knowledge in the form of rules leads to a significant improvement in the performance of the model.
In fact, CLS and Snowflakes are not solvable by the message passing model because they are not distinguishable by the 1-WL test.
The results on LongRings show that long range dependencies can be easily captured by RuleGNNs and also dependencies between nodes of different distances as in case of the EvenOddRings dataset can be easily encoded by appropriate rules.

\begin{table}[t]
	\centering
\begin{adjustbox}{width=1\textwidth}
\tiny
	\begin{tabular}{lcccc|cc}
		\toprule
        & \textbf{NCI1} & \textbf{NCI109}  & \textbf{Mutagenicity} & \textbf{DHFR} & \textbf{IMDB-B}& \textbf{IMDB-M}\\
		\midrule
        Baseline (NoG) \cite{Schulz2019OnTN}&$69.2\pm 1.9$&$68.4\pm 2.2$&$74.8\pm 1.8$&$71.8\pm 5.3$&$71.9\pm 4.8$&$47.7\pm 4.0$\\
		WL-Kernel\cite{DBLP:journals/jmlr/ShervashidzeSLMB11}&\textcolor{red}{$\mathbf{85.2\pm 2.3}$}&\textcolor{red}{$\mathbf{85.0\pm 1.7}$}&\textcolor{red}{$\mathbf{83.8\pm 2.4}$}&$83.5\pm 5.1$&$71.8\pm 4.5$&$51.9\pm 5.6$\\
		\hline
        DGCNN\cite{DBLP:conf/aaai/ZhangCNC18}&$76.4\pm 1.7$&$73.0\pm 2.4$&$77.0\pm 2.0$&$72.6\pm 3.1$&$69.2\pm 3.0$&$45.6\pm 3.4$\\
		DGCNN (features)&$73.6\pm 1.0$& $72.5\pm 1.5$ &$76.3\pm 1.2$&$76.1\pm 3.4$&$69.1\pm 3.5$&$45.8\pm 2.9$\\
		GraphSage\cite{Hamilton2017InductiveRL}&$76.0\pm 1.8$&$77.1\pm 1.8$&$79.8\pm 1.1$&$80.7\pm 4.5$&$68.8\pm 4.5$&$47.6\pm 3.5$\\
		GraphSage (features)&$79.4\pm 2.2$&$78.6\pm 1.6$&$80.1\pm 1.3$&$82.4\pm 3.9$&$69.7\pm 3.1$&$46.6\pm 4.8$\\
        GIN\cite{DBLP:conf/iclr/XuHLJ19}&$80.0\pm 1.4$&$79.7\pm 2.0$&$81.9\pm 1.4$&$79.1\pm 4.4$&$71.2\pm 3.9$&$48.5\pm 3.3$\\
		GIN (features)& $77.3\pm 1.8$ & $77.7\pm 2.0$ & $80.6\pm 1.3$ & $81.8\pm 5.1$ & $70.9\pm 3.8$ & $48.3\pm 2.7$\\
		\midrule
		GSN (paper) \cite{DBLP:journals/pami/BouritsasFZB23} &$83.5\pm 2.3$&-&-&-&\textcolor{red}{$77.8\pm 3.3$}&\textcolor{red}{$54.3\pm 3.3$}\\
		CIN (paper) \cite{DBLP:conf/nips/BodnarFOWLMB21} &$83.6\pm 1.4$&$84.0\pm 1.6$&-&-&$75.6\pm 3.7$&$52.7\pm 3.1$\\
		SIN (paper)\cite{DBLP:conf/icml/BodnarF0OMLB21} & $82.7\pm 2.1$& - & - & - & $75.6\pm 3.2$ & $52.4\pm 2.9$\\
		PIN (paper) \cite{DBLP:conf/aaai/Truong024} & $85.1\pm 1.5$& $84.0\pm 1.5$ & - & - & $76.6\pm 2.9$ & -\\
		\hline
        \textbf{RuleGNN} & $82.8 \pm 2.0$ & $83.2 \pm 2.1$ & $81.5 \pm 1.3 $ & \textcolor{red}{$\mathbf{84.3 \pm 3.2}$} & $\mathbf{75.4 \pm 3.3}$ & $\mathbf{52.0 \pm 4.3}$\\
		\bottomrule
	\end{tabular}
    \end{adjustbox}
	\caption{\label{tab:real-world}
	Test set performance of several state-of-the-art graph classification algorithms averaged over three different runs and $10$ folds.
	The $\pm$ values report the standard deviation over the $10$ folds.
	The overall best results are colored red and the best ones obtained for the fair comparison from~\cite{Errica2019AFC} are in bold.
	The (features) variants of the algorithms use the same information as the RuleGNN as input features additionally to node labels.
	The (paper) results are taken from the respective papers using another experimental setup.
	}
\end{table}

\begin{table}[t]
	\centering
\tiny
	\begin{tabular}{lccc|cc}
		\toprule
		  & LongRings & EvenOddRings & EvenOddRingsCount & CSL & Snowflakes \\
		\midrule
		{Baseline (NoG)} \cite{Schulz2019OnTN}  & $30.17 \pm 3.2$ & $22.25 \pm 3.0$  & $47.9 \pm 3.9$  & $10.0 \pm 0.0$ & $27.3\pm 5.3$\\
		{WL-Kernel} \cite{DBLP:journals/jmlr/ShervashidzeSLMB11}  & $\mathbf{100.0 \pm 0.0}$ & $26.83 \pm 4.2$  & $47.8 \pm 4.3$ & $10.0 \pm 0.0$ & $27.9 \pm 4.1$\\
		\hline
		{DGCNN} \cite{DBLP:conf/aaai/ZhangCNC18} & $29.9\pm 2.6$ & $28.4\pm 2.5$ & $59.1\pm 5.2$ & $10.0\pm0.0$ & $26.0\pm 3.3$\\
		{GraphSAGE} \cite{Hamilton2017InductiveRL} & $29.8\pm 2.8$ & $24.9\pm 2.7$ & $51.3\pm 1.9$ & $10.0\pm0.0$ & $25.0 \pm 1.8$ \\
		{GIN} \cite{DBLP:conf/iclr/XuHLJ19} & $32.0\pm 3.1$ & $26.8\pm 2.5$ & $51.0 \pm 3.7$ & $10.0\pm0.0$ &$24.5 \pm 2.2$\\
		\textbf{RuleGNN}  & $99.0 \pm 3.3$ & $\mathbf{90.2 \pm 7.2}$ & $\mathbf{100.0 \pm 0.0}$ & $\mathbf{100.0 \pm 0.0}$&$\mathbf{97.9\pm 3.2}$\\
		\bottomrule
	\end{tabular}
	\caption{\label{tab:synthetic}
	Test set performance of several state-of-the-art graph classification algorithms averaged over three different runs and $10$ folds.
	The $\pm$ values report the standard deviation over the $10$ folds.
	The best results are highlighted in bold.
	}
\end{table}

\paragraph{Preprocessing and Trainin Details}
\Cref{tab:preprocessing-times} shows more details of training of RuleGNNs on the different datasets.
In particular, we see that except for the DHFR dataset we need less than $12$ epochs on average to reach the best result.
This shows that our approach is very efficient and converges quickly.
At the first glance the average time per epoch seems to be very high which has two reasons.
One is also mentioned in~\cite{DBLP:journals/pami/HanHSYWW22} that there is a gap between the theoretical and practical runtime of dynamic neural networks because
the implementation in PyTorch is not optimized for dynamic neural networks.
The other reason is that our computations run in parallel, i.e., we are able to run all the three runs and $10$ folds in parallel on the same machine which produces some overhead but is more efficient than running the experiments sequentially.
As stated above the preprocessing times (\Cref{tab:preprocessing-times}) are not relevant for the experiments as they are only needed once.
The third column shows the time needed to compute all the pairwise distances between the nodes of the graph.
The fourth column shows the time needed to compute the node labels used for the best model.
The most preprocessing time is needed for IMDB-BINARY and IMDB-MULTI because the graphs are much denser than the other datasets.
For the synthetic datasets except for CSL and Snowflakes we do not need any label preprocessing time as the original node labels are used.
\begin{table}
    \tiny
    \centering
    \begin{tabular}{lrrrrr}
        \toprule
        Dataset & Best Epoch & Avg. Epoch (s) & Preproc. Distances (s) & Preproc. Labels (s) & \#Graphs \\
        \midrule
        NCI1 &$8.3 \pm 5.3$&$377.1 \pm 20.7$& 2.0 & 11.9 & 4\,110 \\
        NCI109 &$6.4 \pm 2.9$&$386.7 \pm 1.9$& 2.4 & 13.2 & 4\,127 \\
        Mutagenicity &$10.1 \pm 4.1$&$575.8 \pm 66.4$& 2.2 & 15.2 & 4\,337 \\
        DHFR &$24.1 \pm 14.6$&$44.4 \pm 9.0$& 0.7 & 3.1& 756 \\
        IMDB-BINARY &$12.3 \pm 4.6$&$24.3 \pm 0.9$& 0.2 & 206.5 & 1\,000 \\
        IMDB-MULTI &$7.7 \pm 3.5$&$19.6 \pm 1.3$& 0.2 & 195.0& 1\,500 \\
        \midrule
        LongRings &$195.2 \pm 15.1$&$0.7 \pm 0.2$& 6.6 & - & 1\,200 \\
        EvenOddRings &$177.1 \pm 15.2$&$1.2 \pm 0.3$& 0.2 & - & 1\,200 \\
        EvenOddRingsCount &$200.0 \pm 0.0$&$0.5 \pm 0.1$& 0.1 & - & 1\,200 \\
        CSL &$50.0 \pm 0.0$&$1.6 \pm 0.0$& 0.1 & 11.8 & 150 \\
        Snowflakes &$192.7 \pm 18.9$&$0.5 \pm 0.1$& 7.1 & 116.8 & 1\,000 \\
        \bottomrule
    \end{tabular}
    \caption{\label{tab:preprocessing-times}
    Runtimes and preprocessing times of the different datasets used in the experiments.
    All values are averaged over the best runs.
    The first column shows the best epoch (highest validation accuracy), the second the average time per epoch, the third the time needed to compute all the pairwise distances between the nodes of the graph, the fourth the time needed to compute the node labels used for the best model and the last the number of graphs in the dataset.}
\end{table}

\paragraph{Architecture Details}

\Cref{tab:hyperparameters} provides an overview of the different architectures used in the experiments that achieved the best results on the validation set.
One advantage of our approach is that messages can be passed over long distances.
Hence, except for the EvenOddRings dataset we used only one layer and the output layer.
In case of NCI1, NCI109, Mutagenicity it turns out that the best model uses the Weisfeiler-Leman rule with $k=2$ iterations.
We restricted the number of maximum labels considered to $500$ which results in $250000$ learnable parameters for the weight matrix and $500$ for the bias vector.
For the output layer we used the bound of $50000$ learnable parameters which was larger than the number of different Weisfeiler-Leman labels in the second iteration.
Interestingly, for NCI1 and NCI109 the best validation accuracy was achieved if considering node pairs with distances from $1$ to $10$,
while in case of Mutagenicity the best model uses node pairs with distances from $1$ to $3$.
We also tested different small patterns, e.g., simple cycles, but they did not improve the results.
For DHFR the best model uses simple cycles with length at most $10$ as patterns for the output layer.
We also tested the Weisfeiler-Leman rule in this case but the validation accuracy was lower.
For IMDB-BINARY and IMDB-MULTI the best model uses the patterns simple cycles with length at most $10$, the triangle and a single edge.
Note that counting the embedding of a single edge as pattern is equivalent to the degree of the node.
We also tested the Weisfeiler-Leman rule but the validation accuracy was lower\footnote{See \href{https://github.com/fseiffarth/RuleGNNCode}{https://github.com/fseiffarth/RuleGNNCode} for a full list of tested hyperparameters.}.
As a next step it would be interesting to consider more rules, rules that come from expert knowledge or also deeper architectures with more rule based layers concatenated.
Regarding the number of learnable parameters we would like to mention that the number is relatively high but lots of parameters are not used in the weight matrix.
Hence, it might be possible to prune the set of learnable parameters by removing those that are not used or those that have a small absolute value.

For the synthetic datasets we use ``expert knowledge'' to define the rules.
Hence we did not tested other rules than those in~\Cref{tab:hyperparameters}.
For LongRings, EvenOddRings and EvenOddRingsCount we used the original node labels for the rule based layers.
In case of EvenOddRings we used two layers.
The first layer considers only node pairs with distance $8$ and collects all the necessary information of opposite nodes.
The second layer that considers only node pairs with distance $4$ and collects the information of the nodes that are $4$ hops away from the nodes with label $0$, see also~\Cref{fig:interpretability}.
For CSL we used as patterns all simple cycles with length at most $10$.
For the Snowflakes dataset we used the patterns, cycle of length $4$ and $5$ and collect the information of all nodes that have pairwise distance $3$.
In this way the RuleGNN is able to distinguish the graphs $M_0, M_1, M_2$ and $M_3$ that are not distinguishable by the 1-WL test.
For the output layer we used the Weisfeiler-Leman rule with $k=2$ iterations to collect the relevant information.

\begin{table}
    \tiny
    \centering
    \begin{tabular}{lrrrrr}
        \toprule
        Dataset & Rules & \multicolumn{3}{c}{Hyperparameters} & \#Learnable Parameters  \\
        & & $k$ & $\mathcal{D}$ & $L$ &  per Layer \\
        \midrule
        NCI1 & wl & 2 & \{1,\ldots,10\}  & 500 & 2\,500\,500 \\
             & wl & 2 & - & 50000 & 4\,220\\\\
        NCI109 & wl & 2 & \{1,\ldots,10\}  & 500 & 2\,500\,500 \\
             & wl & 2 & - & 50000 & 4\,336\\\\
        Mutagenicity & wl & 2 & \{1,\ldots,3\}  & 500 & 750\,500\\
             & wl & 2 & - & 50000 & 4\,972\\\\
        DHFR & wl & 2 & \{1,\ldots,6\} & 500 & 1\,382\,880 \\
             & pattern: (simple\_cycles$\leq 10$) & - & - & - & 112 \\\\
        IMDB-BINARY & pattern: (triangle, edge)& - & \{1,2\} & - & 963\,966\\
             & pattern: (induced\_cycles$\leq 5$) & - & - & - & 990 \\\\
        IMDB-MULTI & pattern: (triangle, edge) & - & \{1,2\} & - & 551\,775 \\
             & pattern: (triangle, edge) & 10 & - & - & 1\,578\\\\
        \midrule
        LongRings & labels & - & \{25\} & -& 30 \\
                    & labels & - & - & - &18\\\\
        EvenOddRings & labels & - & \{8\} & - &272 \\
                    & labels & - & \{4\} & - &272\\
                    & labels & - & - & - & 68\\\\
        EvenOddRingsCount & labels & - & \{8\} & - & 272 \\
                    & labels & - & - & - & 34\\\\
        CSL & pattern: (simple\_cycles$\leq 10$) &  - & \{1\} & -& 8930\\
            & pattern: (simple\_cycles$\leq 10$) & - & - & -& 950\\\\
        Snowflakes & pattern: (cycle\_4, cycle\_5) & - & \{3\} & - &90 \\
                    & wl & 2 & -  & -&20 \\\\
    \end{tabular}
    \caption{\label{tab:hyperparameters}
    Best architectures per dataset. The column \textit{Rule} shows the type of rule used in the model, \textit{wl} stands for the Weisfeiler-Leman labeling, \textit{pattern} for the pattern based labeling and \textit{labels} for the original node labels.
    The last layer is always an aggregation layer. While the others are Weisfeiler-Leman layers or Pattern Counting layers based on the labeling of the nodes.
    The column \textit{Hyperparameters} shows the hyperparameters used in the model, $k$ is the number of iterations of the Weisfeiler-Leman rule, $\mathcal{D}$ is the set of valid pairwise distances considered and $L$ is the bound for the number of different node labels considered.
    The column \textit{\#Learnable Parameters} shows the number of learnable parameters in the model.}
\end{table}

\paragraph{Interpretability of RuleGNNs}
Each learnable parameter of RuleGNNs used for the weight matrices can be interpreted in terms of the importance of a connection between two nodes
in a graph with respect to their labels and their shared property (in our case the distance).
That is, each model provides the relevance of two nodes $i, j$ in a graph with labels $l(i), l(j)$ and distance $d(i, j)$.
In Figures~\ref{fig:weight-evaluation} and~\ref{fig:interpretability} we see how the network has learned the importance of different connections between nodes
for different distances and labels.
The weights are visualized by arrows (thickness corresponds to the absolute value and the color to the sign).
The biases are visualized by the nodes (size corresponds to the absolute value and the color to the sign).
\Cref{fig:weight-evaluation} shows an example of the relevance of the weights for graphs from the DHFR and IMDB-BINARY datasets using the best model.
We can see that in case of DHFR the RuleGNN has learned to pass messages from the outer nodes to ring nodes.
Some ring nodes seem to be more important than others.
It is an interesting open question if these connections can be interpreted in a chemical context.
In case of the IMDB-BINARY dataset we can see that the RuleGNN has learned to pass messages to some specific nodes.
It would be interesting to further investigate if these nodes have a specific meaning in the context of the dataset.
\Cref{fig:interpretability} shows an example of the learned parameters for our synthetic datasets.
Considering the dataset RingEvenOdd in~\Cref{fig:evenoddrings} we see that in the first layer the RuleGNN passes the messages between opposite nodes as given by the rule.
In the second layer it has learned the relevant information, i.e., to collect the information from the nodes that have distance $4$ to the node with label $0$ (dark blue node).
All other connections of distance $4$ have a smaller weight, i.e., are less important.
For the Snowflakes dataset~\Cref{fig:snowflakes} we see that the RuleGNN has learned to distinguish between the four different subgraphs $M_0, M_1, M_2$ and $M_3$ glued to the central circle, showing that the power of the RuleGNN goes beyond the $1$-WL test.
\begin{figure}[htbp]
  \centering
  	\begin{subfigure}{0.28\textwidth}\centering
  		{\includegraphics[width=\textwidth]{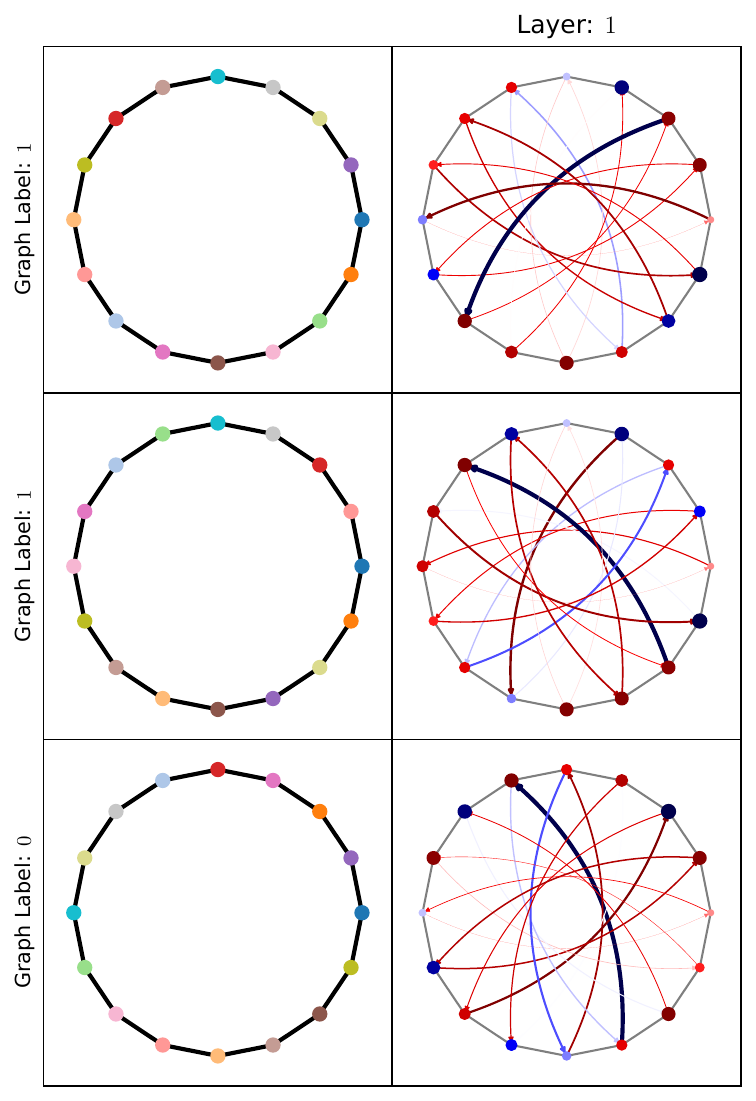}}
  		\caption{EvenOddCount\label{fig:longrings}}
		  	\end{subfigure}
  	\begin{subfigure}{0.41\textwidth}\centering
		  		{\includegraphics[width=\textwidth]{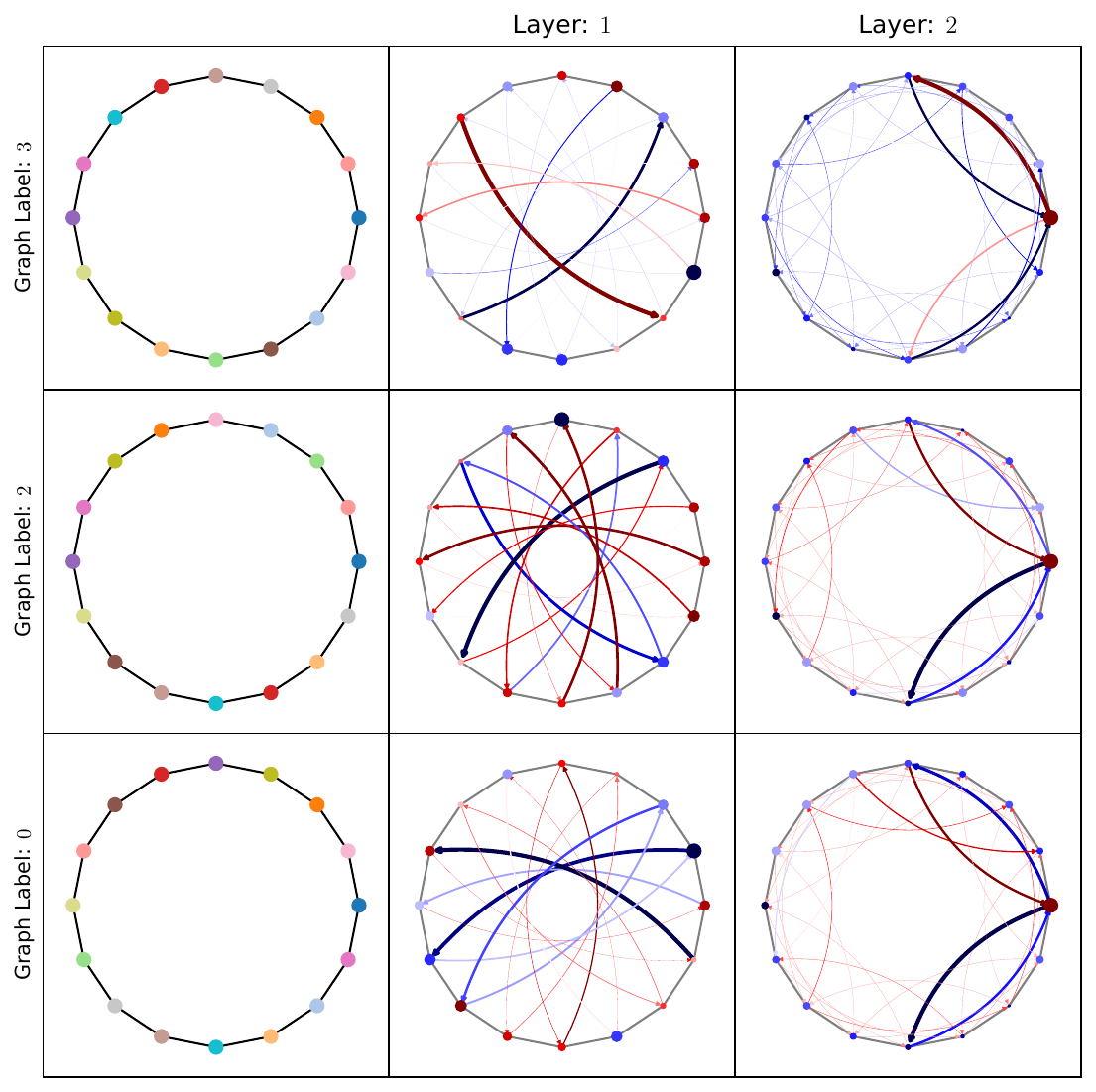}}
  		\caption{EvenOddRings\label{fig:evenoddrings}}
\end{subfigure}
  	\begin{subfigure}{0.28\textwidth}\centering
	{\includegraphics[width=\textwidth]{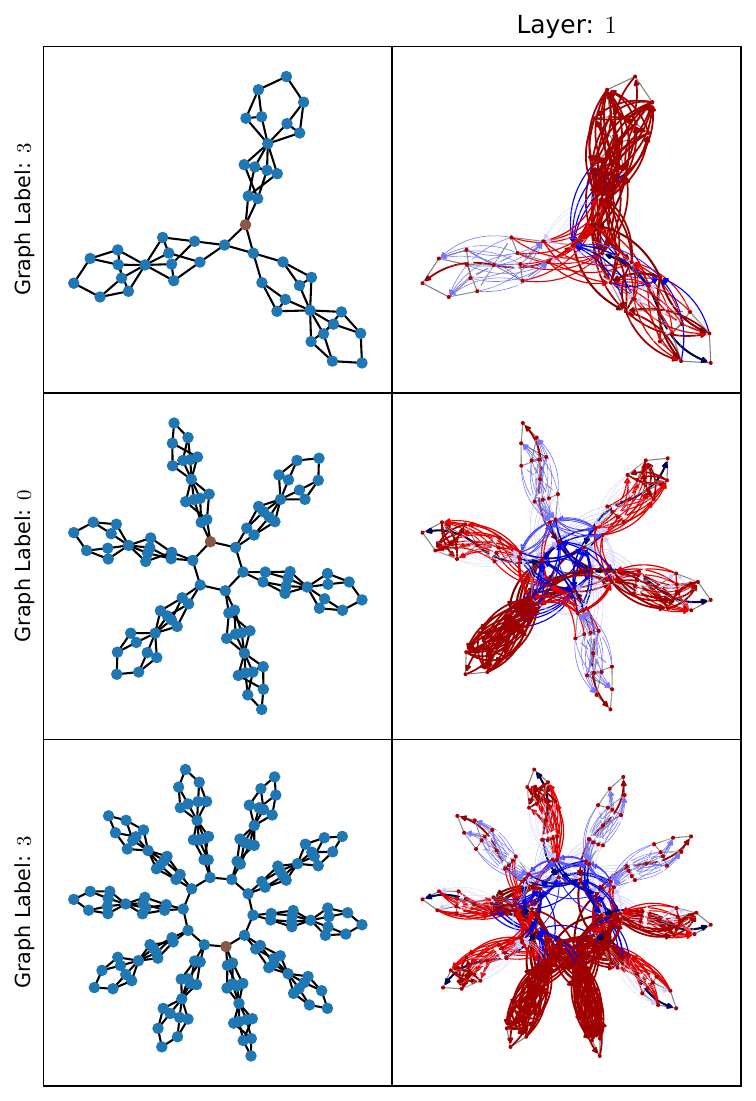}}
  		\caption{Snowflakes\label{fig:snowflakes}}
\end{subfigure}
  \caption{\label{fig:interpretability}Visualization of the learned parameters for the EvenOddRingsCount (\subref{fig:longrings}), EvenOddRings (\subref{fig:evenoddrings}) and Snowflakes (\subref{fig:snowflakes}) dataset.
  The first column shows the graphs and the colors of the nodes represent the different node labels.
  The other columns show the learned weights and biases for the respective rule based layer.
  The message passing weights are visualized by arrows (thicker for higher absolute values)
	  and the biases are visualized by the size of the node (red for positive and blue for negative weights).}
\end{figure}

\section{Related Work}\label{sec:RelatedWork}
Dynamic neural networks have been proven to be more efficient~\cite{DBLP:conf/iclr/HuangCLWMW18}, have more representation power~\cite{DBLP:conf/nips/YangBLN19} and better interpretability~\cite{DBLP:conf/nips/WangLHSYH20} compared to static neural networks, see~\cite{DBLP:journals/pami/HanHSYWW22} for a survey on this topic.
Variants of dynamic neural networks are successfully applied to different tasks on image data~\cite{DBLP:conf/nips/WangLHSYH20,DBLP:conf/cvpr/YangCL16,DBLP:conf/cvpr/ZhuHLD19}, natural language processing~\cite{DBLP:conf/eacl/XuM23} and graph classification~\cite{DBLP:conf/cvpr/SimonovskyK17}.
Using the categories of dynamic neural networks proposed in~\cite{DBLP:journals/pami/HanHSYWW22} our approach can be seen as a \textit{dynamic routing} network which is a subcategory of
sample dependent dynamic neural network which are networks that are dependent on the input data.
Examples of dynamic routing networks are given by~\cite{DBLP:conf/icml/McGillP17,DBLP:conf/nips/SabourFH17,DBLP:conf/eccv/WangYDDG18}.
In particular, our dynamic architecture adaptation is similar to neural architecture search~\cite{DBLP:conf/iclr/ZophL17,DBLP:conf/icml/YouLHX20} (NAS) where the goal is to find the best architecture for a given task.
We assume that the ``best'' architecture is given by the ``best'' rules.
Thus, in our setting neural architecture search translates into the search for the best rules.
The architecture of rule based neural networks depends on the input data and hence our approach can be seen as a neural architecture search that tries to learn the best architecture per input sample~\cite{DBLP:conf/aaai/ChengLJ0S20},
guided by the predefined rules.

Regarding the specific application to graphs we would like to note that graph neural networks based on the message passing paradigm~\cite{DBLP:conf/icml/GilmerSRVD17}
have been successfully applied to the task of graph classification~\cite{Hamilton2017InductiveRL,DBLP:conf/iclr/KipfW17,Velickovic2017GraphAN,DBLP:conf/iclr/XuHLJ19}.
Some limitations that are addressed by our approach have been considered in the literature.
For example $k$-hop approaches that aggregate information over long distances have been considered~\cite{DBLP:conf/icml/Abu-El-HaijaPKA19,DBLP:journals/nn/NikolentzosDV20,DBLP:conf/ijcai/WangY0L21}.
To overcome the limitations of $1$-WL test recent algorithms use additional information like subgraph structures or topological information to improve the performance~\cite{DBLP:conf/nips/BodnarFOWLMB21,DBLP:conf/icml/BodnarF0OMLB21,DBLP:journals/pami/BouritsasFZB23,DBLP:conf/aaai/Truong024}
In~\cite{Zhou2017GraphCA} the authors show that graph neural networks can learn chemical rules like the ortho-para rule for molecules which goes in the direction of interpretability.
To increase the interpretability and explainability of graph neural networks there exist different approaches
that provide insights into the prediction of the model~\cite{DBLP:conf/nips/YingBYZL19,DBLP:journals/corr/abs-2404-12356}.
Moreover, also dynamic approaches for graphs are considered in the literature~\cite{DBLP:conf/cvpr/SimonovskyK17}.
In contrast to these approaches and algorithms, we provide a simple and general scheme to overcome different limitations of graph neural networks at once.
While additional information used in other algorithms is mostly hard-coded, we are able to integrate expert knowledge by arbitrary rules.
In fact, we are not aware of any other graph neural network that is able to dynamically adjust the architecture based on the input graphs using predefined rules.

\section{Concluding Remarks}\label{sec:Conclusion}

Finally, we would like to discuss some limitations of our approach together with possible solutions.
Moreover, we present some concluding remarks including an outlook on future research directions.
In this work, we have only considered 1-dimensional input signals and node labels, i.e., our experimental results are restricted to graphs that do not have multidimensional node features.
Additionally, we have not considered edge features in our rules.
In principle, multidimensional node features and edge labels can be handled by our approach with the cost of increasing complexity.
For each graph we need to precompute the pairwise distances and store the positions of the weights in the weight-matrix.
This is a disadvantage regarding large and dense graphs as we need to store a large number of positions.
For dense graphs the number of positions can be quadratic in the number of nodes.
Thus, we need to find a trade-off between the performance of our model and the sparsity of the weight matrices for large and dense graphs.
To define a meaningful rule for a layer the input and output features need to be logically connected.
Fortunately, this is the case for graphs but for other domains this might be a limitation.
If it is not possible to define a formal rule using expert knowledge or additional information the number of possible rules that have to be tested can be very large.
Thus, it is an interesting question if it is possible to automatically learn a rule that fits the data or captures the expert knowledge in the best way.
As stated in~\cite{DBLP:journals/pami/HanHSYWW22} there is a ``gap between theoretical \& practical efficiency'' regarding dynamic neural networks, i.e., common libraries such as PyTorch or TensorFlow are not optimized for these approaches.

Summarizing our contributions, we have introduced a new type of neural network layer that dynamically arranges the learnable parameters in the
weight matrices and bias vectors according to formal rules.
On the one hand our approach generalizes classical neural network components such as fully connected layers and convolutional layers.
On the other hand we are able to apply rule based layers to the task of graph classification showing that expert knowledge can be integrated into the learning process.
Moreover, we have shown that for graph classification our approach gives rise to a more interpretable neural network architecture as every learnable parameter is related to a specific connection between input and output features.
This leads to some interesting open questions that we would like to address in future work.
For example, it is an interesting question if it is possible to automatically learn the best rule for a given task during training.
The experiments on graphs have shown that the number of relevant parameters, i.e., that are updated during training, is small compared to the total number of parameters.
Thus, for future work it would be interesting to investigate if it is possible to prune the number of parameters, i.e., to set parameters to zero, to increase the interpretability of the model, reduce the computational costs and the size of the model.
The visualization of the learned parameters suggest that the model has learned some abstract rules, e.g., regarding the DHFR dataset.
In fact, it is an open question if these rules have a theoretical background in chemistry.
If chemical rules are learned by our model, one could think of transferring the rules learned on one dataset to another dataset which could increase the predictive performance especially for small datasets.
Another advantage of our rule based approach is that it is easily integrable into existing architectures.
Hence, it would be interesting to consider other tasks like node classification or even other domains like images or text.

\bibliography{bibliography}
\bibliographystyle{plain}

\end{document}